

\documentclass[11pt,UTF8]{article}

\usepackage[UKenglish]{babel}
\usepackage{comment}
\usepackage{amsmath}
\usepackage{latexsym}
\usepackage{amssymb}
\usepackage{esvect}
\usepackage{float}
\usepackage{proof}
\usepackage{graphicx}
\usepackage{color}
\usepackage{hyperref}
\usepackage{url}
\usepackage{longtable}
\usepackage{supertabular}  
\usepackage{enumerate}
\usepackage{authblk}
\usepackage{charter}
\hypersetup{
   colorlinks,%
   citecolor=blue,%
   filecolor=black,%
   linkcolor=red,%
   urlcolor=black
 }

\usepackage[all]{xy}


\newcommand{\lr}[1]{\langle #1 \rangle}

\newcommand{\BP}{\ensuremath{\mathbf{P}}}

\newcommand{\N}{\mathcal{N}}
\newcommand{\A}{\mathbf{I}}

\newcommand{\ELKy}{\mathbf{ELKy}}
\newcommand{\KW}{\mathcal{K}\!y}
\newcommand{\K}{\mathcal{K}}
\newcommand{\KWi}{{\KW}_i}
\newcommand{\KWj}{{\KW}_j}
\newcommand{\Ki}{\K_i}
\newcommand{\SKy}{\mathbb{SKY}}
\newcommand{\SKyi}{\mathbb{SKYI}}
\newcommand{\LP}{\mathbf{LP}}

\newcommand{\E}{\mathcal{E}}

\newcommand{\M}{\mathcal{M}}

\newcommand{\gff}{\lr{\Gamma, F,\vv{f}}}
\newcommand{\dgg}{\lr{\Delta, G,\vv{g}}}
\newcommand{\thhh}{\lr{\Theta, H,\vv{h}}}
\newcommand{\jst}{\!\!:\!}

\newcommand{\bbc}{\mathbb{C}}

\newcommand{\abs}[1]{\vert #1 \vert}



\newcommand{\MP}{\texttt{MP}}
\newcommand{\DISTK}{\texttt{DISTK}}
\newcommand{\DISTY}{\texttt{DISTY}}
\newcommand{\NECK}{\texttt{NECK}}
\newcommand{\NECY}{\texttt{NECKY}}
\newcommand{\TAUT}{\texttt{TAUT}}
\newcommand{\transK}{\texttt{4}}
\newcommand{\eucK}{\texttt{5}}
\newcommand{\transY}{\texttt{4Y}}
\newcommand{\eucY}{\texttt{5Y}}
\newcommand{\transKY}{\texttt{4KY}}
\newcommand{\eucKY}{\texttt{5KY}}
\newcommand{\transYK}{\texttt{4YK}}
\newcommand{\eucYK}{\texttt{5YK}}
\newcommand{\axT}{\texttt{T}}

\newcommand{\PRES}{\texttt{PRES}}

\newtheorem{thm}{Theorem}
\newtheorem{defn}[thm]{Definition}
\newtheorem{prop}[thm]{Proposition}

\newtheorem{corol}[thm]{Corollary}
\newtheorem{lem}[thm]{Lemma}
\newtheorem{remark}[thm]{Remark}

\newenvironment{proof} {\textsc{Proof}\quad} {\hfill $\Box$\\}

\renewcommand{\phi}{\varphi}

\newcommand{\JL}{\mathbf{JL}}
\newcommand{\vDashJ}{\vDash^\mathsf{J}}
\newcommand{\MJ}{\M^\mathsf{J}}
\newcommand{\EJ}{\E^\mathsf{J}}


\begin{document}
\title{A Logic of Knowing Why}

\author[1]{Chao Xu}
\author[1]{Yanjing Wang\thanks{corresponding author}}
\author[2]{Thomas Studer }

\affil[1]{Department of Philosophy, Peking University }
\affil[2]{Institut f\"ur Informatik, University of Bern }


\date{}


\maketitle

\begin{abstract}
When we say ``I know why he was late", we know not only the fact that he was late, but also an explanation of this fact. We propose a logical framework of ``knowing why"  inspired by the existing formal studies on why-questions, scientific explanation, and justification logic. We introduce the $\KWi$ operator into the language of epistemic logic to express ``agent $i$ knows why $\varphi$'' and propose a Kripke-style semantics of such expressions in terms of knowing an explanation of $\varphi$. We obtain two sound and complete axiomatizations w.r.t. two different model classes depending on different assumptions about introspection. 
\medskip

\textbf{Key words}: knowing why, why-questions, scientific explanation, epistemic logic, justification logic, axiomatization
\end{abstract}

\section{Introduction}
Ever since the seminal work by Hintikka \cite{hintikka1962knowledge}, epistemic logic has grown into a major subfield of philosophical logic, which has unexpected applications in other fields such as computer science, AI, and game theory (cf.~the handbook \cite{HBEL}).  Standard epistemic logic focuses on propositional knowledge expressed by ``knowing that $\varphi$''. However, there are various knowledge expressions in terms of ``knowing whether'', ``knowing what'', ``knowing how'', and so on, which have attracted a growing interest in recent years (cf.~the survey \cite{Wang16}).

Among those ``knowing-wh'',\footnote{``Wh'' stands for the  wh-question words.} ``knowing why'' is perhaps the most important driving force behind our advances in  understanding the world and each other. For example, we may want to know why (\cite{bird98}): 
\begin{itemize}
\item the window is broken.
\item the lump of potassium dissolved.
\item he stayed in the caf\'e all day.
\item cheetahs can run at high speeds.
\item blood circulates in the body.
\end{itemize}
Intuitively, each ``knowing why'' expression corresponds to an embedded why-question. To some extent, the process of knowing the world is to answer why-questions about the world \cite{hintikka1981logic}. In fact, there is a very general connection between knowledge and wh-questions discovered by Hinttika in the framework of quantified epistemic logic \cite{Hintikka1983}. For example, consider the question $Q:$ ``Who murdered Mary?'':
\begin{itemize}
\item The \textit{presupposition} of $Q$ is that the questioner knows that Mary was murdered by someone, formalized by $\K \exists x M(x, \text{Mary})$.
\item The \textit{desideratum} of $Q$ is that the questioner knows who murdered Mary, which is formalized by $\exists x\K M(x, \text{Mary})$. The distinction between the desideratum and the presupposition highlights the difference between \textit{de re} and \textit{de dicto} readings of knowing who. 
\item One possible answer to $Q$ is ``John murdered Mary'' formalized as $M(\text{John, Mary})$. However, telling the questioner this fact may not be enough to let the questioner know who murdered Mary since he or she may not have any idea on who John is. Therefore Hintikka also requires the following extra condition. 
\item\textit{Conclusiveness} of the above answer requires that the questioner also knows who John is ($\exists x \K(\text{John}=x))$. Conclusive answers realize the desideratum. 
\end{itemize}
However, Hintikka viewed why-questions, such as Q: ``Why $\varphi$ is the case?'', as a special degenerated case where the presupposition and desideratum are the same: 
\begin{itemize}
\item The \textit{presupposition} of $Q$ is $\K \varphi$; 
\item The \textit{desideratum} of $Q$  is $\K\varphi $.
\end{itemize}
Hintikka then developed a different logical theory of why-questions in \cite{hintikka1995semantics} using the inquiry model and the interpolation theorem of first-order logic. However, we do not think why-questions are special if we can quantify over the possible answers to them. Intuitively, an answer to a question ``Why $\varphi$?'' is an explanation of the fact $\varphi$. In this paper, we take the view shared by Koura  \cite{koura1988approach} and Schurz \cite{schurz2005explanations}:
\begin{itemize}
\item The \textit{presupposition} of $Q$ is that the questioner knows that there is an explanation for the fact $\varphi$: $\K\exists x E(x,\varphi)$.
\item The \textit{desideratum} of $Q$ is that the questioner knows why $\varphi$: $\exists x\K E(x, \varphi)$. 
\end{itemize}
Note that if explanations are \textit{factive} $\exists x E(x,\varphi)\to\varphi$, then the presupposition $\K \exists x E(x, \varphi)$ also implies $\K\varphi$ in standard quantified (normal) modal logic. 
\medskip

Now we have a preliminary logical form of knowing why in terms of the desideratum $\exists x\K E(x, \varphi)$ of the corresponding why-question. The next questions are:
\begin{enumerate}
\item What are (good) \textit{explanations}?
\item How can we capture the relation ($E$ above) between an explanation and a proposition in logic? 
\end{enumerate}

The two questions are clearly related. To answer the first one, let us look back at the examples we mentioned at the beginning of this introduction. In fact there are different kinds of explanations \cite{bird98}: 
\begin{itemize}
\item Causal: The window broke because the stone was thrown at it.
\item Nomic:\footnote{Nomic explanations are explanation in terms of laws of nature.} The lump of potassium dissolved since as a law of nature potassium reacts
with water to form a soluble hydroxide.
\item Psychological: He stayed in the caf\'e all day hoping to see her again.
\item Darwinian: Cheetahs can run at high speeds because of the selective advantage this gives them in
catching their prey.
\item Functional: Blood circulates in order to supply the various parts of the body with oxygen and
nutrients.
\end{itemize}

In philosophy of science, the emphasis is on \textit{scientific explanations} to why questions, which mainly involve Nomic and Causal explanations in the above categorization \cite{bromberger1966questions,koura1988approach,van1980scientific}.  According to Schurz \cite{Schurz1999} there are three major paradigms in understanding (scientific) explanations:\footnote{There are also various dimensions of each paradigm, e.g, probabilistic vs. non-probabilistic, singular events or laws to be explained.} 
\begin{itemize}
\item The \textit{nomic expectability approach} initiated by Hempel \cite{hempel1965aspects}, where a good explanation to $\varphi$ should make the explanandum $\varphi$ predictable or increases $\varphi$'s expectability. 
\item The \textit{causality approach} (cf. e.g, \cite{salmon1984scientific}), where an explanation to $\varphi$ should give a complete list of causes or relevant factors to $\varphi$.
\item The {unification approach} (cf. e.g., \cite{kitcher1981explanatory}) where the focus is on the global feature of explanations in a coherent picture. 
\end{itemize}
 Our initial inspiration comes from the deductive-nomological model proposed by Hempel \cite{hempel1948studies} in the first approach mentioned above, which is the mostly discussed (and criticized) model of explanation. The basic idea is that an explanation is a \textit{derivation} of the explanandum from some universally quantified laws and some singular sentences. Although such a logical empiricistic approach arouse  debates for decades,\footnote{See  \cite{Schurz1995,weber2013scientific} for critical surveys.} it draws our attention to the inner structure of explanations and its similarity to derivations in logic. In this paper, as the first step towards a logic of knowing why, we would like to stay neutral on different types of explanations and their models, and focus on the most abstract logical structure of (scientific) explanations. From a structuralist point of view, we only need to know how explanations compose and interact with each other without saying what they are exactly. 

Now, as for the second question, how can we capture the explanatory relation between explanations and propositions in logic? Our next crucial inspiration came from \textit{Justification Logic} proposed by Artemov \cite{artemov2008logic}. Aiming at making up the gap between  epistemic logic and the mainstream epistemology where justified true belief is the necessary basis of knowledge, justification logics are introduced based on the ideas of \textit{Logic of Proof} (\texttt{LP}) \cite{Artemov95,Art01BSL}.\footnote{ \texttt{LP}  was invented to give an arithmetic semantics to intuitionistic logic under the Brouwer-Heyting-Kolmogorov provability interpretation.} Justification logic introduces formulas in the shape of $t\jst\varphi$ into the logical language, read as ``$t$ is a justification of $\varphi$''.\footnote{Similar ideas also appeared in \cite{vB91} by van Benthem.} Therefore, in justification logic we can talk about knowledge with an \textit{explicit} justification. Moreover, justifications can be composed using various operations. For example, $t\jst(\varphi\to\psi)\land s\jst \varphi\to (t\cdot s)\jst \psi $ is an axiom in the standard justification logic
where $\cdot$ is the application operation of two justifications. Note that if we read $t\jst\varphi$ as ``$t$ is an explanation of the fact $\varphi$'', then this axiom also makes sense in general. 

On the other hand, conceptually, justifications are quite different from explanations. For example, the fact that the shadow of a flagpole is $x$ meters long may  justify that the length of the pole is $y$ meters given the specific time and location on earth. However, the length of the shadow of a flagpole clearly does not explain why the pole is $y$ meters long, if we are looking for causal explanations. In general, a justification of $\varphi$ gives a reason to \textit{believing} $\varphi$ (though not necessarily true), but an explanation gives a reason to \textit{being} $\varphi$, presupposing the truth of $\varphi$. In this paper, we only make use of some technical apparatus of justification logic, and there are quite some differences in our framework compared to justification logic, which will be discussed in Section \ref{sec.comp}. 

Putting all the above ideas together, we are almost ready to lay out the basis of our logic of knowing why. Following  \cite{Wang16}, we enrich the standard (multi-agent) epistemic language with a new ``knowing why'' operator $\KWi$, instead of using a quantified modal language. Roughly speaking, $\KWi\varphi$ is essentially $\exists t{\K_i ( t\jst\varphi)}$, although we do not allow quantifiers and terms in the logical language. As in \cite{Wang15:lori,WF14}, this will help us to control the expressive power of the logic in hopes of a computationally well-behaved logic. The semantics is based on the idea of Fitting model for justification logic. 

Since the language has both the standard epistemic operator $\K_i$ and also the new ``knowing why'' modality $\KWi$, there are lots of interesting things that can be expressed. For example, 
\begin{itemize}
\item $\K_ip\land \neg \KWi p$, e.g, I know that Fermat's last theorem is true but I do not know why.
\item $\neg \KWi p\land \K_i\KWj p$, e.g., I do not know why Fermat's last theorem holds but I know that Andrew Wiles knows why.
\item $\K_i\K_jp\land \neg \KWi \K_j p$, e.g., I know that you know that the paper has been accepted, but I do not know why you know. 
\item $\KW_i\K_jp\land \K_i\neg \KW_j p$, e.g., I know why you know that the paper has been rejected, but I am sure you do not know why.   
\end{itemize}
As we will see later, these situations are all satisfiable in our models.\footnote{According to our semantics to be introduced later, it is also allowed to know why for different reasons (for different people),  which can help to model mutual misunderstanding. }


\medskip

Before going into the technical details, it is helpful to summarize the aforementioned ideas: 
\begin{itemize}
\item The language is inspired by the treatments of the logics of ``knowing what'', and ``knowing how'', where new modalities of such constructions are introduced, without using the full language of quantified epistemic logic.
\item The formal treatment of explanations is inspired by the formal account of justifications in justification logics.  
\item The semantics of $\KWi$ is inspired by Hintikka's logical formulation of the desideratum of Wh-questions: $\exists t{\K_i ( t\jst\varphi)}$. 
\end{itemize}

In the rest of the paper, Section \ref{sec.basic} lays out the  language, semantics and two proof systems of our knowing why logic; Section \ref{sec.SC} proves the completeness of the two systems, and gives an alternative semantics closer to the standard justification logic; Section \ref{sec.comp} gives a detailed comparison with various versions of justification logic; Section \ref{sec.conc} concludes the paper with discussions and future directions.

\medskip

\section{A Logic of Knowing Why}
\label{sec.basic}
 \begin{defn}[Language $\ELKy$] Given a countable set $\A$ of agent names and a countably infinite set $\BP$ of basic propositional letters, the language of $\ELKy$ is defined as\jst   
$$\varphi::= p\mid \neg\varphi \mid (\varphi\wedge\varphi)\mid \Ki\varphi\mid \KWi\varphi$$
where $p\in \BP$ and $i\in \A$.
\end{defn}
We use standard abbreviations for $\top$, $\perp$, $\varphi\rightarrow\psi$, $\varphi\vee \psi$, and $\widehat{\Ki}\varphi$ (the dual of $\K_i\phi$). 
$\KWi\phi$ says that agent $i$ knows why $\phi$ (is the case). 


Intuitively, necessitation rule for $\KWi$ should not hold, e.g.,  although something is a tautology, you may not know why it is a tautology. Borrowing the idea from justification logic, we introduce a special set of ``self-evident'' tautologies which the agents are assumed to know why. Please see Section \ref{sec.comp} for the comparison with \textit{constant specifications} in justification logic, which may contain all axioms of the logic. 
\begin{defn}[Tautology Ground $\Lambda$]\label{kwl-tau-ground}
Tautology Ground $\Lambda$ is a set of propositional tautologies. 
\end{defn}
For example, $\Lambda$ can be the set of all the instances of $\varphi\land\psi\to \varphi$ and $\varphi\land \psi \to \psi$. As we will see later, under such a $\Lambda$, $\KWi(\varphi\land\psi\to \varphi)$ will be valid, which helps the agents to reason more. 


The model of our language $\ELKy$ is similar to the Fitting model of justification logic \cite{fitting2005logic}. Note that we do not have the justification terms in the logical language, but we do have a set $E$ of explanations as semantic objects in the models.  In this work, we require the accessibility relation to be equivalence relations to accommodate the S5 epistemic logic. 
\begin{defn}[$\ELKy$-Model]
An $\ELKy$-model $\M$ is a tuple 
\[
(W, E, \{R_i\mid i\in \A\},\E,V)
\]
where: 
\begin{itemize}
\item W is a non-empty set of possible worlds.
\item E is a non-empty set of explanations satisfying the following conditions\jst  
\begin{enumerate}[(a)]
\item If $s,t\in E$, then a new explanation $(s\cdot t)\in E$; 
\item A special symbol $e$ is in $E$. 
\end{enumerate}
\item $R_i\subseteq W\times W$ is an equivalence relation over $W$. 
\item $\E: E\times \ELKy \to 2^W$ is an \textit{admissible explanation function} satisfying the following conditions:
\begin{enumerate}[(I)]
\item $\E(s, \varphi \rightarrow \psi) \cap \E(t, \varphi) \subseteq \E(s\cdot t, \psi)$. 
\item If $\varphi\in \Lambda$, then $\E(e,\varphi)=W$.
\end{enumerate}
\item $V:\BP \to 2^W$ is a valuation function. 
\end{itemize}
\end{defn}
Note that $E$ does not depend on possible worlds, thus it can be viewed as a \textit{constant domain} of explanations closed under an \textit{application} operator~$\cdot$ which combines two explanations into one. The special element~$e$ in~$E$ is the \textit{self-evident explanation}, which is uniform for all the self-evident formulas in $\Lambda$. The admissible explanation function $\E$ specifies the set of worlds where  $t$ is an explanation of $\varphi$. It is possible that some formula has \textit{no} explanation at all on some world, and some formula has more than one explanation on some world, e.g., one theorem may have different proofs.\footnote{Though as Johan van Benthem pointed out via personal communication, in many cases there is often a best or shortest explanation for the fact.} The first condition of $\E$ captures the composition of explanations resembling the reasoning of knowing why by \textit{modus ponens}, which amounts to the later axiom $\KWi(\varphi\rightarrow \psi)\rightarrow(\KWi \varphi\rightarrow \KWi \psi)$.

\begin{defn}[Semantics]$\ $\\
The satisfaction relation of\/ $\ELKy$ formulas on pointed models is as below:

\begin{tabular}{|l@{\ }c@{\ }l|}
\hline 
$\mathcal{M}, w \vDash p $ & $\Longleftrightarrow$& $ w \in V(p)$ \\
$\mathcal{M}, w \vDash \neg\varphi$&$ \Longleftrightarrow $&$\mathcal{M}, w \not\vDash \varphi $\\
$\mathcal{M}, w \vDash \varphi \wedge \psi $&$\Longleftrightarrow $&$\mathcal{M}, w \vDash \varphi$ \ and \  $\mathcal{M}, w \vDash \psi$ \\
$\M, w\vDash \Ki\varphi$  & $\Longleftrightarrow$ & $\M, v\vDash \varphi$ \ for each $v$ such that $w R_i v$.\\
$\mathcal{M}, w\vDash \KWi\varphi$ & $\Longleftrightarrow$ &
 (1) $\exists t\in E$, for each $v$ such that $wR_i v$, $v\in \E(t,\varphi)$;\\
 &&(2) $\forall v\in W, wR_i v$, $v\vDash \varphi$.\\
 \hline
\end{tabular}
\end{defn}

Now it is clear that our $\KWi\varphi$ is roughly $\exists t \K_i({t\jst\varphi})\land \K_i\varphi$ though there are subtle details to be discussed in Section~\ref{sec.comp} when compared to justification logic. Also note that $\KWi\varphi\to \K_i\varphi$ is clearly valid, but $\KWi$-necessitation is not since not all valid formulas are explained except those in $\Lambda$. Moreover, things we usually take for granted are not valid either, e.g., $\KWi\varphi\land\KWi \psi\to \KWi(\varphi\land\psi)$ is not valid in general: The fact that I have explanations for $\varphi$ and $\psi$, respectively, does not mean that I have an explanation for the co-occurrence of the two, e.g., quantum mechanics and general relativity have their own explanatory power on microcosm and macrocosm, respectively, but a ``theory of everything'' is not obtained by simply putting these two theories together. 

As an example, in the following model (reflexive arrows are omitted), the formula $\K_i p\wedge \neg \KWi p\land \KWj p\land \K_i\KWj p$ holds on the middle world. 
$$\xymatrix@R-10pt {
\txt{$p$\\$t'\jst p$}&\txt{{\phantom{p}}\\$p$\\$t\jst p$\\$s\jst p$}\ar@{-}[r]|j\ar@{-}[l]|i&\txt{$p$\\$s\jst p$}\\
}$$
In this paper, we also consider models with special properties. First of all, we are interested in the models where explanations are always correct,
i.e., if a proposition has an explanation on a world, then it must be true. 

\begin{defn}[Factivity Property] An $\ELKy$-model $\M$ has the factivity property provided that, whenever $w\in \E(t,\varphi)$, then $\M, w\vDash \varphi$.
\end{defn}
Philosophically this property is also debatable, but as we will see later in Theorem \ref{kwl-ci-cfi-equiv}, our logics stay neutral on it.\footnote{Martin Stokhof suggested  an example where one explains to the other why he is the best candidate for a job, but in fact he is not, and his explanation may base on false premises. } 

Besides factivity, it is also debatable whether knowing why is introspective, i.e., are the following reasonable? Note that they are not valid without further conditions on the models.
\begin{center}
$\K_i\varphi\to \KWi\K_i\varphi$, $\neg \K_i\varphi\to \KWi\neg \K_i\varphi$,\\ 
$\KWi\varphi \to\KWi\KWi\varphi$, $\neg \KWi\varphi \to \KWi\neg \KWi\varphi$
\end{center}
One may argue that there is always a self-evident explanation to your own knowledge or ignorance, but another may say it happens a lot that you just forgot why you know some facts. Things can be even more complicated regarding nested $\KWi$. Your explanation for why $\phi$ holds may be quite different from the explanation for why you know why $\phi$, e.g., the window is broken ($\phi$) because you know a stone was thrown at it, and you know why $\phi$ because someone told you so. On the other hand, if you know why a theorem holds because of a proof, it seems reasonable to assume that you know why you know why the theorem holds: you can just verify the proof. The cases of negative introspection may invoke more debates. 

As a first attempt to a logic of knowing why, we want to remain neutral in the philosophical debate, but would like to make it technically possible to handle the cases when introspection is considered reasonable. The following property guarantees that the above introspection axioms are valid.   
\begin{defn}[Introspection Property]
An $\ELKy$-model $\M$ has the introspection property provided that, whenever $\M, w\vDash\varphi$ and $\varphi$ has the form of $\Ki\psi$ or $\neg\Ki\psi$ or $\KWi\psi$ or $\neg\KWi\psi$, then $\exists t\in E$, for each $v$ such that $wR_i v$, $v\in \E(t, \varphi)$.
\end{defn}

We use $\bbc$, $\bbc_F$, $\bbc_I$,  $\bbc_{FI}$ to denote respectively the model classes of all $\ELKy$-models, factive models, introspective models, and models with both properties. Obviously, we have $\bbc_F\subseteq \bbc$, $\bbc_I\subseteq \bbc$, $\bbc_{FI}\subseteq \bbc_F$, and $\bbc_{FI}\subseteq \bbc_I$. In the following, we write $\Gamma\vDash_{\bbc} \varphi$ if $\M,w\vDash \Gamma$ implies \mbox{$\M,w\vDash \varphi$}, for any $\M\in \bbc$ and any $w$ in $\M$. Similar for $\bbc_F$, $\bbc_I$,  $\bbc_{FI}$.


However, as we will see below, factivity does not affect the valid formulas. For an arbitrary $\M\in \bbc$, we can construct a new $\ELKy$-model $\M^F\in \bbc_{F}$ which has factivtiy. Given $\M=(W, E, \{R_i\mid i\in \A\}, \E,V)$, let $\M^F=(W, E, \{R_i\mid i\in \A\}, \E^F, V)$ where:   
$$\E^F(t,\varphi)=\E(t,\varphi)-\{u\mid \M, u\not\vDash \varphi\}$$
We will show that $\M,w$ and $\M^F,w$ satisfy the same $\ELKy$ formulas, thus by the above definition of $\E^F$, it is clear that  $\M^F$ has factivity. 
\begin{lem}\label{kwl-c-cf-model-corres}
For any $\ELKy$-formula $\varphi$ and any $w\in W$, $\M, w\vDash\varphi$ if and only if $\M^F, w\vDash\varphi$.
\end{lem}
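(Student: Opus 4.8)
The plan is to prove the biconditional by induction on the structure of the formula $\varphi$. The key observation is that $\M$ and $\M^F$ differ only in the explanation function ($\E$ versus $\E^F$), while sharing the same worlds $W$, the same explanation set $E$, the same accessibility relations $R_i$, and the same valuation $V$. Since the boolean cases ($p$, $\neg\varphi$, $\varphi\wedge\psi$) and the $\Ki\varphi$ case depend only on $W$, $R_i$, and $V$ (which are identical in the two models), these cases follow immediately from the induction hypothesis with no reference to the explanation function at all. So the only interesting case is $\KWi\varphi$, and everything hinges on reconciling the difference between $\E$ and $\E^F$.

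For the $\KWi\varphi$ case, I would unfold the semantics. Recall that $\M, w\vDash \KWi\varphi$ holds iff (1) there exists $t\in E$ such that for every $v$ with $wR_i v$ we have $v\in\E(t,\varphi)$, and (2) for every $v$ with $wR_i v$ we have $\M, v\vDash\varphi$. The second conjunct is preserved across the two models by the induction hypothesis applied to the subformula $\varphi$ (since $R_i$ is shared and truth of $\varphi$ transfers between $\M$ and $\M^F$). The work is therefore in showing that conjunct (1) transfers. The crucial point is that when both (1) and (2) hold in $\M$, the definition $\E^F(t,\varphi)=\E(t,\varphi)-\{u\mid \M, u\not\vDash\varphi\}$ removes from $\E(t,\varphi)$ exactly the worlds where $\varphi$ fails in $\M$; but conjunct (2) guarantees that every $R_i$-accessible world $v$ satisfies $\varphi$, so none of the relevant $v$ are removed, and hence $v\in\E^F(t,\varphi)$ still holds for the same witness $t$. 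Conversely, since $\E^F(t,\varphi)\subseteq\E(t,\varphi)$, conjunct (1) in $\M^F$ trivially implies conjunct (1) in $\M$ for the same witness.

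I would then assemble the argument symmetrically. For the left-to-right direction, assume $\M, w\vDash\KWi\varphi$; then both (1) and (2) hold in $\M$, conjunct (2) transfers to $\M^F$ by the induction hypothesis, and the subtraction in $\E^F$ is harmless on $R_i$-accessible worlds by the argument just given, so $\M^F, w\vDash\KWi\varphi$. For the right-to-left direction, assume $\M^F, w\vDash\KWi\varphi$; conjunct (2) transfers back by the induction hypothesis, and the inclusion $\E^F(t,\varphi)\subseteq\E(t,\varphi)$ immediately yields conjunct (1) in $\M$. A small technical care is needed because the semantics of $\KWi$ refers to $\E(t,\varphi)$ at the \emph{whole} formula $\varphi$, not at its subformulas, so the induction hypothesis is only invoked for the truth of $\varphi$ (conjunct (2)) and not for membership in the explanation function; the membership side is handled purely by the set-theoretic relationship between $\E$ and $\E^F$ together with conjunct (2).

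The main obstacle, such as it is, is keeping straight which conjunct of the $\KWi$ semantics is being carried by the induction hypothesis and which is being carried by the definition of $\E^F$. The potential subtlety is that the explanation membership condition (1) does not decompose along subformulas, so one might worry the induction is not well-founded for that part; the resolution is that condition (1) is handled by a direct set-theoretic comparison of $\E$ and $\E^F$ at the single formula $\varphi$ rather than by induction, while the induction is only genuinely needed to transfer the truth of $\varphi$ used in condition (2) and in the boolean and $\Ki$ cases. Once this division of labour is recognized, every step is routine.
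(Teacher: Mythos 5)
Your proof is correct and follows essentially the same route as the paper's: induction on formula structure, with the only nontrivial case being $\KWi\varphi$, where conjunct (2) of the semantics transfers by the induction hypothesis and conjunct (1) is handled by the set-theoretic relationship $\E^F(t,\varphi)=\E(t,\varphi)-\{u\mid \M,u\not\vDash\varphi\}$ together with the fact that conjunct (2) guarantees no $R_i$-accessible world is removed. Your remark about which part of the argument is carried by the induction and which by the definition of $\E^F$ is accurate and matches the paper's (more tersely stated) proof.
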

\begin{proof}
We can prove it by induction on the structure of formulas. It is trivial for the atomic, boolean, and $\K\psi$ cases since $\M^F$ only differs from $\M$ in $\E^F$. We just need to prove that $\M, w\vDash\KWi\psi$ iff $\M^F, w\vDash\KWi\psi$.
\begin{itemize}
\item[$\Rightarrow$] Suppose $\M, w\vDash\KWi\psi$. Then $\exists t\in E$, for each $v$ such that $wR_i v$, $v\in \E(t, \psi)$ and $v\vDash \psi$. Thus $v\not \in \{u\mid \M, u\not\vDash \psi\}$. Therefore we have $v\in \E^F(t, \psi)$. Hence by IH we have $\M^F, w\vDash \KWi\psi$.
\item[$\Leftarrow$] Suppose $\M^F, w\vDash\KWi\psi$. Then $\exists t\in E$, for each $v$ such that $wR_i v$, $v\in \E^F(t, \psi)$ and $v\vDash \psi$. By the definition of $\E^F$, we have $v\in \E(t,\psi)$. Hence by IH we get $\M, w\vDash \KWi\psi$. 
\end{itemize}
\end{proof}
\begin{thm}\label{thm.candcf}
For any set $\Gamma\cup\{\varphi\}$ of formulas, $\Gamma\vDash_\bbc \varphi$ if and only if $\Gamma \vDash_{\bbc_F}\varphi$.
\end{thm}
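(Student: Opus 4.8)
The plan is to reduce Theorem~\ref{thm.candcf} to the pointwise correspondence already established in Lemma~\ref{kwl-c-cf-model-corres}. Recall that $\Gamma\vDash_\bbc\varphi$ means: for every $\M\in\bbc$ and every world $w$, if $\M,w\vDash\Gamma$ then $\M,w\vDash\varphi$; and $\Gamma\vDash_{\bbc_F}\varphi$ is the same quantifying only over factive models $\M\in\bbc_F$. Since $\bbc_F\subseteq\bbc$, one direction is immediate: any counterexample against $\Gamma\vDash_{\bbc_F}\varphi$ living in $\bbc_F$ is already a counterexample against $\Gamma\vDash_\bbc\varphi$, so $\Gamma\vDash_\bbc\varphi$ implies $\Gamma\vDash_{\bbc_F}\varphi$. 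The work is entirely in the converse.

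For the converse, I would argue by contraposition. Suppose $\Gamma\not\vDash_\bbc\varphi$. Then there is some model $\M\in\bbc$ and some world $w$ with $\M,w\vDash\Gamma$ but $\M,w\not\vDash\varphi$. The key move is to pass to the factivised model $\M^F$ defined just before the lemma, namely $\M^F=(W,E,\{R_i\mid i\in\A\},\E^F,V)$ with $\E^F(t,\psi)=\E(t,\psi)-\{u\mid \M,u\not\vDash\psi\}$. I would first check that $\M^F$ is genuinely an $\ELKy$-model (conditions (I) and (II) on $\E^F$ must still hold) and that it lies in $\bbc_F$; the paper already remarks that the shape of $\E^F$ forces factivity, so the latter is by construction. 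Then, applying Lemma~\ref{kwl-c-cf-model-corres} at each world, $\M^F$ and $\M$ satisfy exactly the same formulas everywhere, so in particular $\M^F,w\vDash\Gamma$ and $\M^F,w\not\vDash\varphi$. This exhibits a counterexample inside $\bbc_F$, giving $\Gamma\not\vDash_{\bbc_F}\varphi$ and completing the contrapositive.

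The only genuine obstacle is verifying that $\M^F$ really is an admissible $\ELKy$-model, i.e.\ that the pruned function $\E^F$ still satisfies condition~(I), $\E^F(s,\psi\to\chi)\cap\E^F(t,\psi)\subseteq\E^F(s\cdot t,\chi)$, and condition~(II), that $\E^F(e,\psi)=W$ for $\psi\in\Lambda$. For (II) I would observe that formulas in $\Lambda$ are propositional tautologies and hence true at every world, so nothing is removed from $W=\E(e,\psi)$ and $\E^F(e,\psi)=W$ is preserved. For (I), if $u\in\E^F(s,\psi\to\chi)\cap\E^F(t,\psi)$ then $u\in\E(s,\psi\to\chi)\cap\E(t,\psi)$ and, by truth of $\psi\to\chi$ and $\psi$ at $u$, we have $\M,u\vDash\chi$; the original condition~(I) gives $u\in\E(s\cdot t,\chi)$, and since $\M,u\vDash\chi$ we have $u\notin\{u\mid\M,u\not\vDash\chi\}$, so $u\in\E^F(s\cdot t,\chi)$. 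With these two checks in hand, everything else is the routine unwinding of definitions described above, and the lemma does the real semantic lifting.
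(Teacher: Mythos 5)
Your proposal is correct and follows essentially the same route as the paper: one direction from $\bbc_F\subseteq\bbc$, and the converse by transferring a counterexample from $\M$ to the factivised model $\M^F$ via Lemma~\ref{kwl-c-cf-model-corres}. Your explicit verification that $\E^F$ still satisfies conditions (I) and (II) is a detail the paper leaves implicit, and your checks of both are sound.
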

\begin{proof}
\begin{itemize}
\item[$\Rightarrow$] Suppose $\Gamma\vDash_\bbc \varphi$ and $\Gamma \not\vDash_{\bbc_F}\varphi$. By $\Gamma \not\vDash_{\bbc_F}\varphi$, there exists a factive model $\N\in \bbc_F$ such that $\N,w\vDash \Gamma$ and $\N,w\not\vDash \varphi$ for some $w$ in $\N$. Since $\bbc_F\subseteq \bbc$, we have $\N\in \bbc$. Thus $\Gamma\not\vDash_\bbc \varphi$. Contradiction.
\item[$\Leftarrow$] Suppose $\Gamma \vDash_{\bbc_F}\varphi$ and $\Gamma\not\vDash_\bbc \varphi$. Then there exists a model $\M\in \bbc$ such that $\M\vDash \Gamma$ and $\M\not\vDash \varphi$. By Lemma \ref{kwl-c-cf-model-corres}, we can construct an $\M^F\in \bbc_F$ such that $\M^F\vDash \Gamma$ and $\M^F\not\vDash \varphi$. Thus $\Gamma \not\vDash_{\bbc_F}\varphi$. Contradiction.
\end{itemize}
\end{proof}
Now we consider the introspective models. 
\begin{lem}\label{kwl-mfi-introspection}
If $\M$ is introspective, then so is $\M^{F}$.
\end{lem}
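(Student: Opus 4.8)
The plan is to reduce introspection for $\M^F$ to introspection for $\M$ via the correspondence Lemma \ref{kwl-c-cf-model-corres}, the only new ingredient being that a witnessing explanation produced in $\M$ survives the passage from $\E$ to $\E^F$. Concretely, suppose $\M^F, w\vDash\varphi$, where $\varphi$ has one of the four forms $\Ki\psi$, $\neg\Ki\psi$, $\KWi\psi$, or $\neg\KWi\psi$. By Lemma \ref{kwl-c-cf-model-corres} we have $\M, w\vDash\varphi$, so introspectivity of $\M$ supplies some $t\in E$ with $v\in\E(t,\varphi)$ for every $v$ such that $wR_i v$. Since $\E^F(t,\varphi)=\E(t,\varphi)\setminus\{u\mid \M,u\not\vDash\varphi\}$, to upgrade this to $v\in\E^F(t,\varphi)$ with the \emph{same} witness $t$ it suffices to show $\M, v\vDash\varphi$ for each such $v$. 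That yields exactly the introspection clause for $\M^F$ at $w$.

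The crux is therefore the claim that $\M,w\vDash\varphi$ implies $\M,v\vDash\varphi$ for all $v$ with $wR_i v$, whenever $\varphi$ has one of the four designated forms. I would establish this directly from the fact that each $R_i$ is an \emph{equivalence} relation: the $\Ki$-clause, and both conditions in the $\KWi$-clause, depend only on the $R_i$-equivalence class of the evaluation point, so the truth value of each of $\Ki\psi$ and $\KWi\psi$ is constant on every $R_i$-class. Hence $\Ki\psi$, $\neg\Ki\psi$, $\KWi\psi$, and $\neg\KWi\psi$ are all $R_i$-invariant within a class, which is the desired implication. Equivalently one may argue case by case using standard S5 facts: positive introspection (transitivity) handles $\Ki\psi$ and $\KWi\psi$, while negative introspection (symmetry together with transitivity) handles $\neg\Ki\psi$ and $\neg\KWi\psi$; for the last case, if some successor $v$ of $w$ satisfied $\KWi\psi$ then, using $vR_i w$ and the positive case, $w$ would satisfy $\KWi\psi$, contradicting $\M,w\vDash\neg\KWi\psi$.

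The one point that needs care — and the place I expect to state explicitly — is that this $R_i$-invariance is a property of $\M$ itself and does \emph{not} presuppose introspectivity of $\M$; it relies solely on the equivalence-relation assumption on $R_i$ together with the semantic clauses for $\Ki$ and $\KWi$. Once this is in hand the argument closes at once: the witness $t$ delivered by introspectivity of $\M$ also witnesses introspectivity of $\M^F$, since each relevant successor $v$ now satisfies both $v\in\E(t,\varphi)$ and $\M,v\vDash\varphi$, hence $v\in\E^F(t,\varphi)$. No new closure conditions on $E$ are invoked, so $t$ (and the self-evident $e$) remain available in $\M^F$, and the admissibility conditions need not be revisited for this lemma.
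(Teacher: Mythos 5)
Your proof is correct and follows essentially the same route as the paper's: pull the hypothesis back to $\M$ via Lemma \ref{kwl-c-cf-model-corres}, use introspectivity of $\M$ to get the witness $t$, and then use the fact that $R_i$ is an equivalence relation to conclude that $\varphi$ (being of one of the four designated forms) holds at every $R_i$-successor $v$, so that $v\notin\{u\mid\M,u\not\vDash\varphi\}$ and hence $v\in\E^F(t,\varphi)$. The only cosmetic difference is that the paper derives $\M^F,v\vDash\varphi$ first and then transfers back to $\M$, while you argue the $R_i$-class invariance directly in $\M$; both hinge on the same observation.
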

\begin{proof}
Suppose $\M^{F}, w\vDash\varphi$ and $\varphi$ has the form of $\Ki\psi$ or $\neg\Ki\psi$ or $\KWi\psi$ or $\neg\KWi\psi$. By Lemma \ref{kwl-c-cf-model-corres}, we have $\M, w\vDash\varphi$. Since $\M$ has introspection property, we have that $\exists t\in E$, for each $v$ such that $wR_i v$, $v\in \E(t, \varphi)$. Since $\M^{F}, w\vDash\varphi$ and $R_i$ is an equivalence relation, we have $\M^{F}, v\vDash \varphi$ for each $v$ such that $wR_i v$. Thus $v\not \in \{u\mid \M, u\not\vDash \varphi\}$. Thus $v\in \E^{F}(t,\varphi)$. Hence $\exists t\in E$, for each $v$ such that $wR_i v$, $v\in \E^{F}(t, \varphi)$. Therefore $\M^{F}$ has introspection property.
\end{proof}
It is then easy to show:
\begin{thm}\label{kwl-ci-cfi-equiv}
For any set $\Gamma\cup\{\varphi\}$, $\Gamma\vDash_{\bbc_I} \varphi$ if and only if $\Gamma \vDash_{\bbc_{FI}}\varphi$.
\end{thm}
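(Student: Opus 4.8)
The plan is to mirror the proof of Theorem~\ref{thm.candcf} almost verbatim, but now restricting attention to the introspective model classes $\bbc_I$ and $\bbc_{FI}$. The two auxiliary facts that make this work are already in hand: Lemma~\ref{kwl-c-cf-model-corres}, which says the factivization $\M^F$ is elementarily equivalent to $\M$ at every world, and Lemma~\ref{kwl-mfi-introspection}, which says $\M^F$ inherits introspection from $\M$. I would argue both directions by contradiction, exactly as in Theorem~\ref{thm.candcf}.

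For the left-to-right direction, suppose $\Gamma\vDash_{\bbc_I}\varphi$ yet $\Gamma\not\vDash_{\bbc_{FI}}\varphi$. The latter yields a model $\N\in\bbc_{FI}$ and a world $w$ with $\N,w\vDash\Gamma$ but $\N,w\not\vDash\varphi$. Since $\bbc_{FI}\subseteq\bbc_I$, this same $\N$ already witnesses $\Gamma\not\vDash_{\bbc_I}\varphi$, contradicting the hypothesis. This direction is immediate and relies only on the inclusion of model classes noted earlier.

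For the right-to-left direction, suppose $\Gamma\vDash_{\bbc_{FI}}\varphi$ yet $\Gamma\not\vDash_{\bbc_I}\varphi$, obtaining some $\M\in\bbc_I$ and $w$ with $\M,w\vDash\Gamma$ and $\M,w\not\vDash\varphi$. I then pass to the factivized model $\M^F$. By Lemma~\ref{kwl-c-cf-model-corres}, $\M^F$ satisfies exactly the same $\ELKy$-formulas as $\M$ at each world, so $\M^F,w\vDash\Gamma$ and $\M^F,w\not\vDash\varphi$. The definition of $\E^F$ makes $\M^F$ factive, and Lemma~\ref{kwl-mfi-introspection} makes it introspective, so $\M^F\in\bbc_{FI}$. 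This contradicts $\Gamma\vDash_{\bbc_{FI}}\varphi$ and completes the argument.

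The only genuine content is the claim that the factivization procedure stays inside the introspective class, and that is precisely what Lemma~\ref{kwl-mfi-introspection} supplies; so I expect no real obstacle here. The proof is entirely parallel to Theorem~\ref{thm.candcf}, with Lemma~\ref{kwl-mfi-introspection} playing the role that closure of $\E^F$ under the model conditions played there.
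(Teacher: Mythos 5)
Your proof is correct and is exactly the argument the paper intends: it leaves Theorem~\ref{kwl-ci-cfi-equiv} as "easy to show" immediately after establishing Lemma~\ref{kwl-mfi-introspection}, precisely so that the proof of Theorem~\ref{thm.candcf} can be repeated with the factivization $\M^F$ now certified to remain in the introspective class. No gaps.
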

Theorems \ref{thm.candcf} and \ref{kwl-ci-cfi-equiv} showed that factivity is neglectable w.r.t. the logic. 
\medskip

In the following, we present two proof systems which differ only on the introspection axioms of $\KWi$ essentially. In the next section, we will show their completeness w.r.t. $\bbc$ and $\bbc_{I}$, respectively.  
\begin{center}
System $\SKy$\\
\begin{minipage}{0.5\textwidth}
\begin{itemize}
\item[\TAUT] Classical Propositional Axioms
\item[\DISTK] $\Ki(\varphi\rightarrow \psi)\rightarrow(\Ki \varphi\rightarrow\Ki \psi)$
\item[\DISTY] $\KWi(\varphi\rightarrow \psi)\rightarrow(\KWi \varphi\rightarrow \KWi \psi)$
\item[\axT] $ \Ki\varphi\to\varphi$
\item[\transK] $\Ki \varphi\rightarrow\Ki \Ki\varphi$
\item[\eucK] $\neg \Ki \varphi\rightarrow\Ki \neg\Ki\varphi$
\item[\PRES] $\KWi \varphi\rightarrow\Ki \varphi$
\item[\transYK] $\KWi \varphi\rightarrow\Ki\KWi \varphi$
\end{itemize}
\end{minipage}
\begin{minipage}{0.36\textwidth}
\begin{itemize}
\item[\MP] Modus Ponens
\item[\NECK] $\vdash \varphi\ \ \Rightarrow \ \ \vdash \Ki\varphi$
\item[\NECY] If $\varphi\in \Lambda$, then $\vdash \KWi\varphi$ 
\end{itemize}
\end{minipage}
\end{center}
$\PRES$ is the presupposition axiom which says ``knowing that'' is necessary for ``knowing why''. \transYK\ is the positive introspection of ``knowing why'' by ``knowing that''.\footnote{Note that this is not one of the four introspection axioms of $\KWi$ mentioned earlier.} 
The reader may wonder about the corresponding negative introspection of $\transYK$ and it is provable in $\SKy.$
\begin{prop}\label{prop:5YK}
\eucYK: $\neg\KWi\varphi\rightarrow\Ki\neg\KWi\varphi$ is provable in $\SKy$.
\end{prop}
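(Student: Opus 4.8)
The plan is to show that $\neg\KWi\varphi$ is interderivable with $\neg\Ki\KWi\varphi$ inside $\SKy$, and then to transport the negative introspection that $\Ki$ already enjoys (axiom \eucK) over to $\KWi$. The guiding observation is that $\transYK$ gives $\KWi\varphi\to\Ki\KWi\varphi$ while the \axT-instance $\Ki\KWi\varphi\to\KWi\varphi$ gives the converse, so $\KWi\varphi$ and $\Ki\KWi\varphi$ are provably equivalent. Consequently $\neg\KWi\varphi$ and $\neg\Ki\KWi\varphi$ are also equivalent, and since $\Ki$ satisfies the Euclidean axiom, I can first derive $\Ki\neg\Ki\KWi\varphi$ from $\neg\KWi\varphi$ and then rewrite the inner $\neg\Ki\KWi\varphi$ back to $\neg\KWi\varphi$ underneath the box.

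Concretely, I would carry out the following steps. First, from the \axT-instance $\Ki\KWi\varphi\to\KWi\varphi$ I take the contrapositive (a \TAUT\ together with \MP) to obtain $\neg\KWi\varphi\to\neg\Ki\KWi\varphi$. Second, the \eucK-instance $\neg\Ki\KWi\varphi\to\Ki\neg\Ki\KWi\varphi$ chains with the previous line to yield $\neg\KWi\varphi\to\Ki\neg\Ki\KWi\varphi$. Third, I take the contrapositive of the $\transYK$-axiom $\KWi\varphi\to\Ki\KWi\varphi$ to get $\neg\Ki\KWi\varphi\to\neg\KWi\varphi$; applying \NECK\ and then \DISTK\ to this implication pushes the necessitation inside, giving $\Ki\neg\Ki\KWi\varphi\to\Ki\neg\KWi\varphi$. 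Chaining this with the result of the second step produces exactly $\neg\KWi\varphi\to\Ki\neg\KWi\varphi$, which is \eucYK.

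The only mildly delicate point is the third step, where I must move from the propositional implication $\neg\Ki\KWi\varphi\to\neg\KWi\varphi$ to its ``boxed'' form $\Ki\neg\Ki\KWi\varphi\to\Ki\neg\KWi\varphi$; this is the standard normal-modal-logic monotonicity argument via \NECK\ and \DISTK, and it is the step that genuinely uses that $\Ki$ is a normal modality rather than mere propositional manipulation. Everything else is bookkeeping: the two contrapositives and the two chainings are pure \TAUT/\MP\ reasoning, and the substantive modal content is carried entirely by the three axioms $\axT$, $\eucK$, and $\transYK$. I expect no real obstacle beyond being careful that the instances of $\axT$ and $\eucK$ are taken with $\KWi\varphi$ substituted for the schematic propositional variable.
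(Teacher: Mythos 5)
Your proposal is correct and is essentially identical to the paper's own proof: both use the \axT-instance $\Ki\KWi\varphi\to\KWi\varphi$ and \eucK\ to get $\neg\KWi\varphi\to\Ki\neg\Ki\KWi\varphi$, then contrapose \transYK\ and push it under $\Ki$ via \NECK\ and \DISTK\ to rewrite the inner formula. Nothing further is needed.
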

\begin{proof}$\ $\\
\begin{tabular}{cclcl}
(1) & \ \ &$\Ki\KWi\varphi\rightarrow \KWi\varphi$ &\ \ & \axT\\
(2) & \ \ & $\neg\KWi\varphi\rightarrow \neg\Ki\KWi\varphi$ & \ \ &Contraposition (1)\\
(3) & \ \ & $\neg\Ki\KWi\varphi\rightarrow \Ki\neg\Ki\KWi\varphi$ & \ \ & \eucK\\
(4) & \ \ & $\KWi\varphi\rightarrow\Ki\KWi\varphi$ & \ \ & \transYK\\
(5) & \ \ & $\neg\Ki\KWi\varphi\rightarrow\neg\KWi\varphi$ & \ \ & Contraposition (4)\\
(6) & \ \ & $\Ki(\neg\Ki\KWi\varphi\rightarrow\neg\KWi\varphi)$ & \ \ & \NECK (5)\\
(7) & \ \ & $\Ki\neg\Ki\KWi\varphi\rightarrow\Ki\neg\KWi\varphi$ & \ \ & \MP (6), \DISTK\\
(8) & \ \ & $\neg\KWi\varphi\rightarrow\Ki\neg\Ki\KWi\varphi$ & \ \ & \MP (2)(3)\\
(9) & \ \ & $\neg\KWi\varphi\rightarrow\Ki\neg\KWi\varphi$ & \ \ & \MP (7)(8)
\end{tabular}
\end{proof}
Note that the choice of $\Lambda$ and $\NECY$ in $\SKy$ also give us some flexibility in the logic. 

System $\SKyi$ is obtained by replacing \transK, \eucK\ and \transYK\ in $\SKy$ by the those four stronger introspection axioms of $\KWi$: 
\begin{center}
\begin{minipage}{0.39\textwidth}
\begin{itemize}
\item[\transKY] $\Ki\varphi\rightarrow \KWi\Ki\varphi$
\item[\eucKY] $\neg\Ki\varphi\rightarrow \KWi\neg\Ki\varphi$
\end{itemize}
\end{minipage}
\begin{minipage}{0.49\textwidth}
\begin{itemize}
\item[\transY] $\KWi \varphi\rightarrow\KWi\KWi \varphi$
\item[\eucY] $\neg\KWi\varphi\rightarrow\KWi\neg\KWi\varphi$  
\end{itemize}
\end{minipage}
\end{center}
It is straightforward to show that $\SKyi$ is deductively stronger than $\SKy$.  
\begin{prop}\label{kwlm-five-prop}
The following are provable in $\SKyi$
\begin{center}
\begin{minipage}{0.49\textwidth}
\begin{itemize}
\item[\transK] $\Ki\varphi\rightarrow \Ki\Ki\varphi$
\item[\eucK] $\neg\Ki\varphi\rightarrow \Ki\neg\Ki\varphi$
\end{itemize}
\end{minipage}
\begin{minipage}{0.49\textwidth}
\begin{itemize}
\item[\transYK]  $\KWi \varphi\rightarrow\Ki\KWi \varphi$
\item[\eucYK]  $\neg\KWi\varphi\rightarrow\Ki\neg\KWi\varphi$  
\end{itemize}
\end{minipage}
\end{center}
\end{prop}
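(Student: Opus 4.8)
The plan is to exploit the fact that $\SKyi$ retains the presupposition axiom \PRES\ ($\KWi\varphi\to\Ki\varphi$), and to notice that each of the four target formulas is nothing but the ``$\Ki$-weakening'' of one of the four strong introspection axioms that define $\SKyi$. Every derivation will therefore follow the same two-step template: start from the relevant strong axiom, whose conclusion has the form $\KWi\chi$, then apply \PRES\ instantiated at $\chi$ to replace the outer $\KWi$ by $\Ki$, and finally chain the two implications by hypothetical syllogism (using \TAUT\ and \MP). The key insight is simply that \PRES\ lets any $\KWi$ occurring in head position be downgraded to $\Ki$.

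Concretely, I would first derive \transK. From \transKY\ we have $\Ki\varphi\to\KWi\Ki\varphi$, and instantiating \PRES\ at $\Ki\varphi$ gives $\KWi\Ki\varphi\to\Ki\Ki\varphi$; composing the two yields $\Ki\varphi\to\Ki\Ki\varphi$. The derivation of \eucK\ is identical in shape: \eucKY\ gives $\neg\Ki\varphi\to\KWi\neg\Ki\varphi$, \PRES\ instantiated at $\neg\Ki\varphi$ gives $\KWi\neg\Ki\varphi\to\Ki\neg\Ki\varphi$, and composition gives $\neg\Ki\varphi\to\Ki\neg\Ki\varphi$.

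The remaining two are handled the same way, now starting from the pure-$\KWi$ introspection axioms. For \transYK, the axiom \transY\ gives $\KWi\varphi\to\KWi\KWi\varphi$ and \PRES\ instantiated at $\KWi\varphi$ gives $\KWi\KWi\varphi\to\Ki\KWi\varphi$; composition yields $\KWi\varphi\to\Ki\KWi\varphi$. For \eucYK, the axiom \eucY\ gives $\neg\KWi\varphi\to\KWi\neg\KWi\varphi$ and \PRES\ instantiated at $\neg\KWi\varphi$ gives $\KWi\neg\KWi\varphi\to\Ki\neg\KWi\varphi$; composition yields $\neg\KWi\varphi\to\Ki\neg\KWi\varphi$.

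There is essentially no obstacle to surmount: once one observes the downgrading role of \PRES, all four claims collapse to a single hypothetical-syllogism step. The only point requiring a little care is instantiating \PRES\ at the correct (possibly complex) formula $\chi$ in each case---namely $\Ki\varphi$, $\neg\Ki\varphi$, $\KWi\varphi$, and $\neg\KWi\varphi$ respectively---rather than at $\varphi$ itself. I would present the whole argument as four short two-line derivation tables mirroring the style of the proof of Proposition~\ref{prop:5YK}.
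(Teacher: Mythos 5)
Your proof is correct: each of the four target formulas follows from the corresponding strong introspection axiom of $\SKyi$ by composing with \PRES\ instantiated at the appropriate complex formula ($\Ki\varphi$, $\neg\Ki\varphi$, $\KWi\varphi$, $\neg\KWi\varphi$). The paper omits the proof as ``straightforward,'' and your two-step hypothetical-syllogism argument is exactly the intended one.
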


\section{Soundness and Completeness}
\label{sec.SC}
Due to Theorems \ref{thm.candcf} and \ref{kwl-ci-cfi-equiv}, we only need to prove soundness and completeness w.r.t. $\bbc$ and $\bbc_{I}$ instead of $\bbc_F$ and $\bbc_{FI}$ respectively.

\begin{thm}[Soundness]
$\SKy$ and $\SKyi$ are sound for $\bbc$ and $\bbc_{I}$ respectively.
\end{thm}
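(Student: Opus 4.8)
The plan is to argue by induction on the length of derivations: I will verify that every axiom of the system is valid in the relevant class of models and that each of the three rules $\MP$, $\NECK$, $\NECY$ preserves validity; soundness then follows. Since $\bbc_I\subseteq\bbc$, the axioms common to $\SKy$ and $\SKyi$ — namely $\TAUT$, $\DISTK$, $\DISTY$, $\axT$ and $\PRES$ — need only be checked once, for $\bbc$, and are then automatically valid in $\bbc_I$ as well. The routine cases are the propositional and standard normal-modal ones: $\TAUT$ holds because the Boolean clauses are classical, $\DISTK$ is the usual $\Ki$-distribution valid over any relation, and for $\SKy$ the axioms $\axT$, $\transK$, $\eucK$ are validated by reflexivity, transitivity and the Euclidean property of the equivalence relation $R_i$ in the familiar way. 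Finally $\PRES$ is immediate from clause (2) of the truth condition for $\KWi$.

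The two genuinely new verifications are $\DISTY$ and $\NECY$. For $\DISTY$, assuming $\M,w\vDash\KWi(\varphi\to\psi)$ and $\M,w\vDash\KWi\varphi$, I take the witnesses $s$ and $t$ supplied by clause (1) of each assumption; closure of $E$ under $\cdot$ (condition (a)) gives $s\cdot t\in E$, and condition (I) on $\E$ yields $v\in\E(s,\varphi\to\psi)\cap\E(t,\varphi)\subseteq\E(s\cdot t,\psi)$ for every $v$ with $wR_iv$, establishing clause (1) for $\KWi\psi$; clause (2) then follows by modus ponens inside each such $v$, using clause (2) of the two hypotheses. For $\NECY$, if $\varphi\in\Lambda$ then condition (II) gives $\E(e,\varphi)=W$, so $e$ witnesses clause (1), while clause (2) holds because a propositional tautology is true at every world. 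The rules $\MP$ and $\NECK$ preserve validity in the standard fashion, $\NECK$ because a valid formula is true at every world and hence at every $R_i$-successor.

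The core of the argument is the behaviour of $\KWi$ along the epistemic relation, which I will isolate as a single observation: because $R_i$ is an equivalence relation, any two $R_i$-related worlds have the same $R_i$-class, so both clauses of the truth condition for $\KWi\chi$ (and the condition for $\Ki\chi$) depend only on the $R_i$-class of the evaluation point. Hence the truth values of $\Ki\chi$ and $\KWi\chi$ are constant along each $R_i$-class. This fact alone validates the $\SKy$-axiom $\transYK$ outright (and, semantically, its negative counterpart $\eucYK$), since if $\M,w\vDash\KWi\varphi$ then every $v$ with $wR_iv$ lies in the same class and therefore also satisfies $\KWi\varphi$, which is exactly $\M,w\vDash\Ki\KWi\varphi$.

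For $\SKyi$ over $\bbc_I$, the four introspective axioms $\transKY$, $\eucKY$, $\transY$, $\eucY$ are then handled uniformly, since each has the shape $\chi\to\KWi\chi$ with $\chi$ of one of the forms $\Ki\psi$, $\neg\Ki\psi$, $\KWi\psi$, $\neg\KWi\psi$. Assuming $\M,w\vDash\chi$ in an introspective $\M\in\bbc_I$, the introspection property directly supplies the witness $t$ required by clause (1) of $\KWi\chi$, while the class-invariance observation gives $\M,v\vDash\chi$ for every $v$ with $wR_iv$, i.e.\ clause (2). I expect the main obstacle to be $\DISTY$: it is the only step where one must simultaneously track two distinct witnesses and combine them through closure under $\cdot$ and condition (I); once that composition is handled and the class-invariance observation is in place, the remaining cases are either standard modal bookkeeping or a direct reading-off from conditions (I), (II) and the two clauses of the $\KWi$ semantics.
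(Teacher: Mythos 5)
Your proposal is correct and follows essentially the same route as the paper: S5 facts for the standard axioms, condition (I) plus closure under $\cdot$ for $\DISTY$, condition (II) for $\NECY$, invariance of $\KWi$ along $R_i$-classes (the paper uses transitivity directly) for $\transYK$, and the introspection property combined with that invariance for the four $\SKyi$ axioms over $\bbc_I$. The only difference is presentational: you isolate the class-invariance of $\Ki\chi$ and $\KWi\chi$ as a single reusable observation, which the paper leaves implicit in its remark that the $\SKyi$ cases are ``trivial.''
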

\begin{proof}
Since $\ELKy$-models are based on S5 Kripke models, the standard axioms of system $\mathbb{S}5$ are all valid. So we just need to check the rest. First we check the non-trivial axioms and rules of $\SKy$ on $\bbc$.  
\begin{itemize}
\item[\DISTY :] $\KWi(\varphi\rightarrow \psi)\rightarrow(\KWi \varphi\rightarrow \KWi \psi)$\\
Suppose $w\vDash \KWi (\varphi\rightarrow\psi)$ and $w\vDash \KWi\psi$. Then by the definition of~$\vDash$, $\exists s,t \in E$, for any $v$ such that $wR_i v$, $v\in \E(s,\varphi\rightarrow\psi)$, 
\mbox{$v\in \E(t,\varphi)$}, \mbox{$v\vDash \varphi\rightarrow \psi$}, and $v\vDash \varphi$. We find
$v\vDash\psi$ and $v\in \E(s,\varphi\rightarrow\psi)\cap \E(t,\varphi)$. By the condition (I) of $\E$, we have $v\in \E(s\cdot t, \psi)$. Hence $w\vDash \KWi\psi$.
\item[\PRES :] $\KWi \varphi\rightarrow\Ki \varphi$\\
Suppose $w\vDash\KWi\varphi$. Then for any $v$ such that $wR_i v$, we have $v\vDash \varphi$. Thus $w\vDash \Ki\varphi$.
\item[\transYK:]$\KWi\varphi\to \Ki\KWi\varphi$\\ 
Suppose $w\vDash\KWi\varphi$. Then $\exists t \in E$, for any $v$ such that $wR_i v$, $v\in \E(t,\varphi)$ and $v\vDash \varphi$. Let $u \in W$ be arbitrary with $wR_i u$. Since $R_i$ is transitive, we find that
$u R_i v$ implies $w R_i v$. Thus $u\vDash\KWi\varphi$. We conclude that $w\vDash\Ki\KWi\varphi$.
\item[\NECY] Suppose $\varphi\in \Lambda$. Since $\Lambda$ is a set of tautologies, we have $\forall w\in W$, $w\vDash \varphi$. By the condition (II) of $\E$, $\forall w\in W, \exists e\in E$, for any $v$ such that $wR_i v$, $v\in \E(e,\varphi)$. Therefore it follows that $\vDash \KWi\varphi$. Hence \NECY\ is valid.
\end{itemize}
Validity of the introspection axioms of $\SKyi$ on $\bbc_{I}$ are trivial based on the introspective property and the fact that $R_i$ is an equivalence relation.  
\end{proof}
To establish completeness, we build a canonical model for each consistent set of $\ELKy$ formulas. We will first show the completeness of $\SKy$ over $\bbc$, and the completeness of $\SKyi$ over $\bbc_{I}$ is then straightforward. 

Let $\Omega$ be the set of all maximal $\SKy$-consistent sets of formulas. For any maximal consistent set (abbr. MCS) $\Gamma$, let $\Gamma^\#_i=\{\KWi\varphi\mid \KWi\varphi\in \Gamma\}\cup\{\varphi\mid \Ki\varphi\in\Gamma\}$. 

\begin{defn}[Canonical model for $\SKy$] The canonical model $\mathcal{M}^c$ for $\SKy$   is a tuple $ (W^c, E^c, \{R^c_i\mid i\in \A\}, \E^c, V^c)$ where:
\begin{itemize}
\item $E^c$ is defined in BNF: $t::=e\mid\varphi\mid (t\cdot t)$ where $\varphi\in \ELKy$.
\item $W^c=\{\lr{\Gamma, F, \{f_i\mid i\in \A\}}\mid\lr{\Gamma, F}\in \Omega\times\mathcal{P}( E^c\times \ELKy)$, $f_i: \{\varphi\mid \KWi\varphi\in \Gamma\}\to E^c$ such that $F$ and $\vv{f}$ satisfy the conditions below$\}$:
\begin{enumerate}[(i)]
\item If $\lr{s,\varphi\rightarrow\psi}, \lr{t,\varphi}\in F$, then $\lr{s\cdot t, \psi}\in F$; 
\item If $\varphi\in \Lambda$, then $\lr{e,\varphi}\in F$;
\item For any $i\in \A$, $\KWi\varphi\in \Gamma$ implies  $\lr{f_i(\varphi),\varphi}\in F$.
\end{enumerate}
\item $\lr{\Gamma, F, \vv{f} } R^c_i\lr{\Delta, G, \vv{g}}$ iff  (1) $\Gamma^\#_i\subseteq \Delta$, and (2) $f_i=g_i$.
\item $\E^c\jst  E^c\times \ELKy\rightarrow 2^{W^c}$ defined by $ \E^c(t, \varphi)=\{ \gff\mid \lr{t, \varphi}\in F\}$.
\item $V^c(p)=\{\gff\mid p\in \Gamma\}$.
\end{itemize}
\end{defn}
In the above we write $\vv{f}$ for $\{f_i\mid i\in \A\}$. Essentially, $f_i$ is a witness function picking one $t$ for each formula in $\{\varphi\mid \KWi\varphi\in \Gamma\}$. It can be used to construct the possible worlds for the existence lemma for $\neg \KWi\phi$. We do need such witness functions for each $i$, since $i,j$ can have different explanations for $\varphi$. In the definition of $R^c_i$, we need to make sure the selected witnesses are the same for $i$. We include  $\phi\in\ELKy$ as building blocks in $E^c$ for technical convenience, as it will become more clear below when we construct the successors. The component $F$ in each world is used to encode the information of $\E^c$ locally, also for the technical convenience to define the canonical relations. Note that merely maximal consistent sets are not enough in constructing the canonical model, as in the case of the logic of knowing what in \cite{WF13,WF14}.

Now we need to show that the canonical model is well-defined:
\begin{itemize}
\item $\E^c$ satisfies conditions (I) and (II) in the definition of $\ELKy$-models. 
\item $R^c_i$ is an equivalence relation.
\item $W^c$ is not empty. Actually, we will prove a stronger one: for any $\Gamma\in \Omega$, there exist $F$ and $\vv{f}$ such that $\gff\in W^c$.
\end{itemize}
\begin{prop}
$\E^c$ satisfies the conditions (I) and (II) of $\ELKy$-models. 
\end{prop}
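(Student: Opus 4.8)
The plan is to prove each condition by simply unfolding the definition of $\E^c$ and invoking the corresponding structural constraint that is already built into every world of $W^c$. The key observation is that $\E^c(t,\varphi)=\{\gff\mid\lr{t,\varphi}\in F\}$ depends on a world $\gff$ only through its $F$-component, and that membership in $W^c$ forces $F$ to be closed under exactly the operations mentioned in conditions (I) and (II). So neither condition requires any genuine model-theoretic argument; each reduces to a one-line consequence of conditions (i) and (ii) in the definition of $W^c$.

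For condition (I), I would fix an arbitrary world $\gff\in\E^c(s,\varphi\rightarrow\psi)\cap\E^c(t,\varphi)$. By the definition of $\E^c$ this means $\lr{s,\varphi\rightarrow\psi}\in F$ and $\lr{t,\varphi}\in F$, where crucially it is the \emph{same} component $F$ in both cases since we are talking about a single world. Condition (i) on the elements of $W^c$ then yields $\lr{s\cdot t,\psi}\in F$, which is precisely the statement that $\gff\in\E^c(s\cdot t,\psi)$. Since the world was arbitrary, $\E^c(s,\varphi\rightarrow\psi)\cap\E^c(t,\varphi)\subseteq\E^c(s\cdot t,\psi)$.

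For condition (II), suppose $\varphi\in\Lambda$ and let $\gff\in W^c$ be arbitrary. Condition (ii) on the elements of $W^c$ guarantees $\lr{e,\varphi}\in F$, so $\gff\in\E^c(e,\varphi)$; hence $W^c\subseteq\E^c(e,\varphi)$, and the reverse inclusion is immediate from the definition of $\E^c$ as a subset of $W^c$. Thus $\E^c(e,\varphi)=W^c$. I do not expect any real obstacle here: the conditions (i) and (ii) imposed on $F$ when defining $W^c$ were tailored to make exactly these two closure properties hold, so the entire content of the proposition is the bookkeeping of matching (i) against (I) and (ii) against (II). The only point deserving a moment's care is noting that the two premises of condition (I) concern one and the same world, so that they speak of the same set $F$.
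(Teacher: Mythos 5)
Your proof is correct and follows essentially the same route as the paper's: unfold the definition of $\E^c$ and apply conditions (i) and (ii) imposed on the component $F$ in the definition of $W^c$. Your explicit remark that both premises of condition (I) refer to the same $F$ of a single world is exactly the point that makes the argument go through.
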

\begin{proof}
\begin{enumerate}[(1)]
\item Suppose $\gff \in \E^c(s,\varphi\rightarrow\psi)\cap\E^c(t,\varphi)$. By the definition of $\ \E^c$, we have $\lr{s, \varphi\rightarrow\psi}, \lr{t, \varphi}\in F$. By the condition (i) of $F$ in the definition of $W^c$, we have $\lr{s\cdot t, \psi}\in F$. Hence it follows that $\gff\in \E^c(s\cdot t, \psi)$. Therefore $\E^c(s, \varphi \rightarrow \psi) \cap \E^c(t, \varphi) \subseteq \E^c(s\cdot t, \psi)$. 
\item Suppose $\varphi\in \Lambda$. For an arbitrary $\gff\in W^c$, by condition (ii) in the definition of $W^c$, we have $\lr{e,\varphi}\in F$ . By the definition of $\E^c$, we have $\gff\in \E^c(e, \varphi)$. Hence $\E^c(e, \varphi)=W^c$. 
\end{enumerate}
\end{proof}
Before proceeding further, we prove the following handy proposition.
\begin{prop}\label{kwlm-facilitate}
If $\gff R^c_i\dgg$, then (1) $\KWi\varphi\in \Gamma$ iff $\KWi\varphi\in \Delta$ and (2) $\Ki\varphi\in \Gamma$ iff $\Ki\varphi\in \Delta$.
\end{prop}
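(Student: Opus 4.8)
The plan is to unfold the definition of $R^c_i$---which requires $\Gamma^\#_i\subseteq\Delta$ together with $f_i=g_i$---and to exploit the S5-style introspection available in $\SKy$ to show that membership of a $\KWi$- or $\Ki$-formula is preserved in \emph{both} directions along an edge $\gff R^c_i\dgg$. Only the first clause $\Gamma^\#_i\subseteq\Delta$ will be needed; recall that $\Gamma^\#_i=\{\KWi\psi\mid\KWi\psi\in\Gamma\}\cup\{\psi\mid\Ki\psi\in\Gamma\}$. Throughout I use that $\Gamma$ and $\Delta$ are maximal $\SKy$-consistent, so they are closed under modus ponens and contain exactly one of $\chi,\neg\chi$ for every formula $\chi$.

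For the two forward (``only if'') directions I would argue directly from the definitions. If $\KWi\varphi\in\Gamma$, then $\KWi\varphi$ lies in $\Gamma^\#_i$ by the very definition of $\Gamma^\#_i$, hence $\KWi\varphi\in\Delta$. If $\Ki\varphi\in\Gamma$, then by \transK\ we obtain $\Ki\Ki\varphi\in\Gamma$, so taking $\psi=\Ki\varphi$ in the second component of $\Gamma^\#_i$ gives $\Ki\varphi\in\Gamma^\#_i\subseteq\Delta$. Thus positive introspection---built into the definition of $\Gamma^\#_i$ for the $\KWi$ case and supplied by \transK\ for the $\Ki$ case---settles both ``only if'' parts.

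For the two backward (``if'') directions I would argue by contradiction using negative introspection. Suppose $\KWi\varphi\in\Delta$ but $\KWi\varphi\notin\Gamma$; then $\neg\KWi\varphi\in\Gamma$, and by \eucYK\ (Proposition~\ref{prop:5YK}) we get $\Ki\neg\KWi\varphi\in\Gamma$, whence $\neg\KWi\varphi\in\Gamma^\#_i\subseteq\Delta$, contradicting the consistency of $\Delta$. Symmetrically, if $\Ki\varphi\in\Delta$ but $\Ki\varphi\notin\Gamma$, then $\neg\Ki\varphi\in\Gamma$, and \eucK\ yields $\Ki\neg\Ki\varphi\in\Gamma$, so $\neg\Ki\varphi\in\Gamma^\#_i\subseteq\Delta$, again contradicting consistency. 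This closes all four inclusions and delivers the two biconditionals.

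There is no serious obstacle here: the statement is precisely the syntactic shadow of the fact that in S5 each agent is fully introspective about its $\Ki$- and $\KWi$-attitudes, so such formulas are invariant along $R^c_i$. The only point requiring care is to pair each case with the correct axiom---positive introspection (\transK, and the definitional inclusion for $\KWi$) for the forward directions, and negative introspection (\eucK\ and the derived \eucYK) for the backward ones---and to keep in mind that \eucYK\ is not primitive in $\SKy$ but was established earlier as Proposition~\ref{prop:5YK}.
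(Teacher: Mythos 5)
Your proposal is correct and follows essentially the same route as the paper's own proof: the forward directions come straight from the definition of $\Gamma^\#_i$ (using \transK\ for the $\Ki$ case), and the backward directions proceed by contradiction via the negative introspection principles \eucYK\ (derived in Proposition~\ref{prop:5YK}) and \eucK. Nothing of substance differs.
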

\begin{proof} $\ $
\begin{enumerate}
\item[(1)] Suppose $\KWi\varphi\in \Gamma$. By the definition of $R^c_i$, we have $\KWi\varphi\in \Delta$.
\item[] Suppose $\KWi\varphi\in \Delta$ and $\KWi\varphi\not\in \Gamma$. By the property of MCS, we have $\neg\KWi\varphi\in \Gamma$. By the provable \eucYK\ ($\neg\KWi\varphi\rightarrow\Ki\neg\KWi\varphi$) and the property of MCS, we have $\Ki\neg\KWi\varphi\in \Gamma$. By the definition of $R^c_i$, we have $\neg\KWi\varphi\in \Delta$. Contradiction. 
\item[(2)] Suppose $\Ki\varphi\in \Gamma$. By axiom \transK\ and the property of MCS, we have  $\Ki\Ki\varphi\in \Gamma$. By the definition of $R^c_i$, we have  $\Ki\varphi\in \Delta$.
\item[] Suppose $\Ki\varphi\in \Delta$ and $\Ki\varphi\not\in \Gamma$. By the property of MCS, we have  $\neg\Ki\varphi\in \Gamma$. By axiom $\eucK$ we have  $\Ki\neg\Ki\varphi\in \Gamma$. Then we have $\neg\Ki\varphi\in \Delta$ by the definition of $R^c_i$. Contradiction.
\end{enumerate}
\end{proof}
\begin{prop}
$R^c_i$ is an equivalence relation.
\end{prop}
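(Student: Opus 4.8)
The plan is to verify the three defining properties of an equivalence relation separately, in each case splitting the two-part condition of $R^c_i$ into the set-inclusion clause $\Gamma^\#_i\subseteq\Delta$ and the witness-equality clause $f_i=g_i$. The equality clause is governed entirely by reflexivity, symmetry, and transitivity of ordinary equality, so it contributes nothing substantial; all the work lies in the inclusion clauses, which are handled by the S5 introspection axioms together with Proposition \ref{kwlm-facilitate}. For \emph{reflexivity}, I would check $\gff R^c_i\gff$. The clause $f_i=f_i$ is immediate, so it remains to show $\Gamma^\#_i\subseteq\Gamma$: a formula $\KWi\varphi\in\Gamma^\#_i$ is by definition already a member of $\Gamma$, and if $\varphi\in\Gamma^\#_i$ arises because $\Ki\varphi\in\Gamma$, then axiom \axT\ together with maximal consistency yields $\varphi\in\Gamma$.

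For \emph{symmetry}, I would assume $\gff R^c_i\dgg$ and aim at $\dgg R^c_i\gff$. The equality $g_i=f_i$ is immediate from $f_i=g_i$, so I only need $\Delta^\#_i\subseteq\Gamma$. Here Proposition \ref{kwlm-facilitate} does the heavy lifting: if $\KWi\varphi\in\Delta$, then part (1) of that proposition gives $\KWi\varphi\in\Gamma$; and if $\Ki\varphi\in\Delta$, then part (2) gives $\Ki\varphi\in\Gamma$, whence $\varphi\in\Gamma$ by \axT. This is exactly where the Euclidean axioms \eucK\ and the provable \eucYK, already invoked in the proof of Proposition \ref{kwlm-facilitate}, enter the argument.

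For \emph{transitivity}, I would assume $\gff R^c_i\dgg$ and $\dgg R^c_i\thhh$ and derive $\gff R^c_i\thhh$. Transitivity of equality gives $f_i=h_i$. For $\Gamma^\#_i\subseteq\Theta$ I would chase each element through the two given inclusions: a formula $\KWi\varphi\in\Gamma$ lies in $\Gamma^\#_i\subseteq\Delta$, hence $\KWi\varphi\in\Delta^\#_i\subseteq\Theta$; and for $\Ki\varphi\in\Gamma$, axiom \transK\ gives $\Ki\Ki\varphi\in\Gamma$, so $\Ki\varphi\in\Gamma^\#_i\subseteq\Delta$, and then $\varphi\in\Delta^\#_i\subseteq\Theta$.

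I do not expect a serious obstacle. The only point requiring care is that the inclusion clause $\Gamma^\#_i\subseteq\Delta$ is genuinely asymmetric, so symmetry of $R^c_i$ cannot simply be read off the definition and must instead be obtained from the introspection-driven equivalences of Proposition \ref{kwlm-facilitate}; the witness-function clause, by contrast, is symmetric and transitive for free.
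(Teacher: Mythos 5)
Your proposal is correct and follows essentially the same route as the paper: reflexivity via \axT, symmetry via Proposition \ref{kwlm-facilitate} (whose proof carries the \eucK/\eucYK\ load) plus \axT, and transitivity via \transK\ to push $\Ki$-formulas one step along the chain — the paper merely packages the latter step inside Proposition \ref{kwlm-facilitate} rather than invoking \transK\ directly. No gaps.
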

\begin{proof}
We just need to prove $R^c_i$ is reflexive, transitive, and symmetric.
\begin{enumerate}[(1)]
\item $R^c_i$ is reflexive: For all $\Ki\psi\in \Gamma$, by axiom \axT\ we have $\psi\in \Gamma$. Hence we have $\gff R^c_i\gff$ by the definition of $R^c_i$.
\item $R^c_i$ is transitive: Suppose $\gff R^c_i\dgg $ and $\dgg  R^c_i \thhh $. Suppose $\KWi\varphi, \Ki\psi\in \Gamma$. By the definition of $R^c_i$, we have $f_i=g_i=h_i$. By Proposition \ref{kwlm-facilitate}, we have $\KWi\varphi, \Ki\psi\in \Delta$. By the definition of~$R^c_i$ we get $\KWi\varphi, \psi\in \Theta$. Therefore $\gff R^c_i \thhh $.
\item $R^c_i$ is symmetric: Suppose $\gff R^c_i\dgg  $. Then we have $f_i=g_i$. Suppose $\KWi\varphi,\Ki\psi\in \Delta $. By proposition \ref{kwlm-facilitate}, we have $\KWi\varphi\in \Gamma$ and $\Ki\psi\in \Gamma$. By axiom \axT, $\psi\in \Gamma$, thus $\dgg  R^c_i \gff$.
\end{enumerate}
\end{proof}

In order to establish that for any $\Gamma\in \Omega$, there exist $F$ and $\vv{f}$ such that $\gff\in W^c$, we define the following construction.
\begin{defn}
Given any $\Gamma\in\Omega$, construct $F^\Gamma$ and $\vv{f}^\Gamma$ as follows: 
\begin{itemize}
\item $F^\Gamma_0=\{\lr{\varphi,\varphi}\mid\exists i\in \A,\KWi\varphi\in \Gamma\}\cup\{\lr{e,\varphi}\mid \varphi\in \Lambda\}$
\item $F^\Gamma_{n+1}=F^\Gamma_n\cup\{\lr{s\cdot t,\psi}\mid\lr{s,\varphi\rightarrow\psi}\in F^\Gamma_n,\lr{t,\varphi}\in F^\Gamma_n \text{ for some $\varphi$}\}$
\item $F^\Gamma=\bigcup_{n\in \mathbb{N}}F^\Gamma_n$. 
\item $\forall i\in \A, f^\Gamma_i: \{\varphi\mid \KWi\varphi\in \Gamma\}\to E^c, f^\Gamma_i(\varphi)=\varphi$.
\end{itemize}

\end{defn}
By the construction of $F^\Gamma_n (n\in \mathbb{N})$, $\{F^\Gamma_n\mid n\in \mathbb{N}\}$ is monotonic. i.e., $\forall m,n\in N$, if $m\leqslant n$, then $F^\Gamma_m\subseteq F^\Gamma_n$.

\begin{prop}\label{kwlm-F-existence}
For any $\Gamma\in \Omega$, $\lr{\Gamma, F^\Gamma, \vv{f}^\Gamma}\in W^c$.
\end{prop}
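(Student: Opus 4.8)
The plan is to verify that the triple $\lr{\Gamma, F^\Gamma, \vv{f}^\Gamma}$ satisfies all the membership requirements for $W^c$, namely that $\lr{\Gamma, F^\Gamma}\in \Omega\times\mathcal{P}(E^c\times \ELKy)$, that each $f^\Gamma_i$ has the correct domain and codomain, and crucially that conditions (i), (ii), (iii) from the definition of $W^c$ all hold. The first few points are immediate: $\Gamma\in\Omega$ by assumption, $F^\Gamma\subseteq E^c\times \ELKy$ by construction (every pair added is of the form $\lr{\varphi,\varphi}$, $\lr{e,\varphi}$, or $\lr{s\cdot t,\psi}$, all living in $E^c\times \ELKy$), and $f^\Gamma_i$ is defined exactly on $\{\varphi\mid \KWi\varphi\in\Gamma\}$ with $f^\Gamma_i(\varphi)=\varphi\in E^c$. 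So the real content is checking the three closure/coherence conditions.

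For condition (ii), I would note that whenever $\varphi\in\Lambda$, the pair $\lr{e,\varphi}$ is placed into $F^\Gamma_0$ directly by the definition, hence lies in $F^\Gamma$. For condition (iii), if $\KWi\varphi\in\Gamma$ for some $i$, then $\lr{\varphi,\varphi}\in F^\Gamma_0\subseteq F^\Gamma$; since $f^\Gamma_i(\varphi)=\varphi$, this says exactly $\lr{f^\Gamma_i(\varphi),\varphi}\in F^\Gamma$, as required. These two are essentially bookkeeping against the definition of $F^\Gamma_0$. The most substantive case is condition (i), the closure under application: I must show that if $\lr{s,\varphi\rightarrow\psi}\in F^\Gamma$ and $\lr{t,\varphi}\in F^\Gamma$, then $\lr{s\cdot t,\psi}\in F^\Gamma$.

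For condition (i) the key is to use the monotonicity of the stages $\{F^\Gamma_n\}$ noted just before the proposition. Given $\lr{s,\varphi\rightarrow\psi}\in F^\Gamma$ and $\lr{t,\varphi}\in F^\Gamma$, there exist $m$ and $n$ with $\lr{s,\varphi\rightarrow\psi}\in F^\Gamma_m$ and $\lr{t,\varphi}\in F^\Gamma_n$. Setting $k=\max(m,n)$ and invoking monotonicity, both pairs already belong to $F^\Gamma_k$. Then by the very definition of the successor stage, $\lr{s\cdot t,\psi}\in F^\Gamma_{k+1}\subseteq F^\Gamma$. This is the one place where the inductive (stagewise) construction genuinely does work for us, so I would present it carefully.

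I do not expect any serious obstacle here; the statement is a straightforward well-definedness check whose only mild subtlety is matching the existential quantifier over stages in condition (i) with the level-by-level definition of $F^\Gamma_{n+1}$. The one point worth stating explicitly is that condition (i) of $W^c$ quantifies over \emph{all} pairs in $F$ with matching antecedent/consequent shape, whereas the construction only adds $\lr{s\cdot t,\psi}$ when $\lr{s,\varphi\rightarrow\psi}$ and $\lr{t,\varphi}$ appear at a common finite stage; monotonicity of the chain is exactly what closes this gap, so I would make sure to cite it. With the three conditions verified, $\lr{\Gamma, F^\Gamma, \vv{f}^\Gamma}$ meets every clause in the definition of $W^c$, completing the proof.
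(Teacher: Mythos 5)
Your proof is correct and follows essentially the same route as the paper's: conditions (ii) and (iii) are read off directly from the definition of $F^\Gamma_0$ and $f^\Gamma_i$, and condition (i) is handled by using monotonicity of the stages to find a common level $k$ at which both premises appear, so that the composed pair enters at stage $k+1$. The only difference is that you additionally spell out the routine type-checking of the triple, which the paper leaves implicit.
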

\begin{proof}
 To prove $\lr{\Gamma, F^\Gamma, \vv{f}^\Gamma}\in W^c$, we just need to show that $F^\Gamma$ satisfies conditions (i)-(iii) in the definition of $W^c$. 
\noindent 
\begin{itemize}
\item Suppose $\lr{s,\varphi\rightarrow\psi}, \lr{t,\varphi}\in F^\Gamma$. By monotonicity of $\{F^\Gamma_n\mid n\in \mathbb{N}\}$, there exists $k\in \mathbb{N}$ such that $\lr{s,\varphi\rightarrow\psi}, \lr{t,\varphi}\in F^\Gamma_k$. Thus we have $\lr{s\cdot t, \psi}\in F^\Gamma_{k+1}$ by the construction of $F^\Gamma_k (k\in \mathbb{N})$. Hence $\lr{s\cdot t, \psi}\in F^\Gamma$, thus $F^\Gamma$ satisfies condition (i). 
\item Suppose $\varphi\in \Lambda$. By the construction of $F^\Gamma_0$, we have $\lr{e,\varphi}\in F^\Gamma_0$. Hence we get \mbox{$\lr{e,\varphi}\in F^\Gamma$}.  Thus $F$ satisfies condition (ii).
\item Suppose $\KWi\varphi\in \Gamma$. Then we have $\lr{\varphi,\varphi}\in F^\Gamma$ by the construction of $F^\Gamma_0$ and $F^\Gamma$. Since $\KWi\varphi\in \Gamma$, by the construction of $f^\Gamma_i$, we have $\varphi\in dom(f^\Gamma_i)$ and $f^\Gamma_i(\varphi)=\varphi$. Thus we have $\lr{f^\Gamma_i(\varphi),\varphi}\in F^\Gamma$. Hence, we have that $F^\Gamma$ and $\vv{f}^\Gamma$ satisfy condition (iii).
\end{itemize}
\end{proof}
This completes the proof that $\M^c$ is well-defined. Now we can establish the existence lemmas for $\K_i$ and $\KWi$. 
\begin{lem}[$\Ki$ Existence Lemma]\label{kwlm-k-existence-lemma}
For any $\gff\in W^c$, if $\widehat{\Ki}\varphi\in \Gamma$, then there exists a $\dgg \in W^c$ such that $\gff R^c_i\dgg $ and $\varphi\in \Delta$. 
\end{lem}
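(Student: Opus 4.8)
My plan is to follow the standard canonical-model existence-lemma strategy, building the required successor world from the "unboxed" set $\Gamma^\#_i\cup\{\varphi\}$, but with the extra bookkeeping forced by the second and third components of worlds in $W^c$. Concretely, I would first show that $\Gamma^\#_i\cup\{\varphi\}$ is $\SKy$-consistent, extend it by Lindenbaum's lemma to an MCS $\Delta\ni\varphi$ with $\Gamma^\#_i\subseteq\Delta$, and then manufacture suitable $G$ and $\vv{g}$ so that $\dgg\in W^c$ and both clauses of $R^c_i$ hold.

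For consistency, suppose for contradiction that $\Gamma^\#_i\cup\{\varphi\}$ is inconsistent. Then there are finitely many $\KWi\psi_1,\dots,\KWi\psi_m\in\Gamma$ and $\chi_1,\dots,\chi_n$ with $\Ki\chi_1,\dots,\Ki\chi_n\in\Gamma$ such that $\vdash(\bigwedge_k\KWi\psi_k\wedge\bigwedge_j\chi_j)\to\neg\varphi$. Applying \NECK\ and then \DISTK\ repeatedly (using that $\Ki$ is normal and distributes over conjunction) yields $\vdash(\bigwedge_k\Ki\KWi\psi_k\wedge\bigwedge_j\Ki\chi_j)\to\Ki\neg\varphi$. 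Now each $\Ki\KWi\psi_k\in\Gamma$ by axiom \transYK\ applied to $\KWi\psi_k\in\Gamma$, and each $\Ki\chi_j\in\Gamma$ by assumption, so MCS-closure gives $\Ki\neg\varphi\in\Gamma$, contradicting $\widehat{\Ki}\varphi=\neg\Ki\neg\varphi\in\Gamma$. This establishes consistency and hence the existence of $\Delta$.

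It remains to build $G$ and $\vv{g}$ so that $\dgg\in W^c$, $\Gamma^\#_i\subseteq\Delta$, and $f_i=g_i$. The delicate point is the constraint $f_i=g_i$: since $f_i$ has domain $\{\psi\mid\KWi\psi\in\Gamma\}$ while $g_i$ must have domain $\{\psi\mid\KWi\psi\in\Delta\}$, I first need these two domains to coincide. The inclusion $\{\psi\mid\KWi\psi\in\Gamma\}\subseteq\{\psi\mid\KWi\psi\in\Delta\}$ is immediate from $\Gamma^\#_i\subseteq\Delta$; for the converse, if $\KWi\psi\in\Delta$ but $\KWi\psi\notin\Gamma$, then $\neg\KWi\psi\in\Gamma$, so by the provable \eucYK\ (Proposition \ref{prop:5YK}) $\Ki\neg\KWi\psi\in\Gamma$, whence $\neg\KWi\psi\in\Gamma^\#_i\subseteq\Delta$, contradicting $\KWi\psi\in\Delta$. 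Thus the domains agree and I may simply set $g_i:=f_i$, and for $j\neq i$ set $g_j(\psi):=\psi$ (as in the construction preceding Proposition \ref{kwlm-F-existence}). I then define $G$ by closing the seed set $\{\lr{g_j(\psi),\psi}\mid j\in\A,\ \KWj\psi\in\Delta\}\cup\{\lr{e,\psi}\mid\psi\in\Lambda\}$ under the application rule (i), exactly mimicking the $F^\Gamma_n$ construction, and verify conditions (i)--(iii) of $W^c$: (i) holds by the closure, (ii) because $\lr{e,\psi}$ is in the seed, and (iii) because $\lr{g_j(\psi),\psi}$ is in the seed for every $j$ and $\KWj\psi\in\Delta$. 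Finally $\gff R^c_i\dgg$ holds since $\Gamma^\#_i\subseteq\Delta$ and $f_i=g_i$, and $\varphi\in\Delta$ as required.

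I expect the two genuinely non-routine steps to be (a) the consistency argument, where one must use \transYK\ rather than the ordinary positive introspection \transK\ to push the $\KWi$-formulas inside $\Ki$, and (b) securing $f_i=g_i$, which is what forces the appeal to \eucYK\ to show $\Gamma$ and $\Delta$ carry exactly the same $\KWi$-formulas; the construction and verification of $G$ are then straightforward bookkeeping.
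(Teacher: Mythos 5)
Your proposal is correct and follows essentially the same route as the paper: consistency of $\Gamma^\#_i\cup\{\varphi\}$ via \NECK, \DISTK\ and \transYK, a Lindenbaum extension, the \eucYK-based argument that $\Gamma$ and $\Delta$ carry exactly the same $\KWi$-formulas so that $g_i:=f_i$ is well-defined, and a closure construction for $G$ verified against conditions (i)--(iii). The only cosmetic difference is that the paper seeds its closure with all of $F$ together with the new witness pairs (making conditions (i) and (ii) immediate from $F\subseteq G$), whereas you seed it with just the witness pairs and the $\lr{e,\psi}$ pairs for $\psi\in\Lambda$; both choices satisfy all three conditions, so nothing is lost.
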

\begin{proof}
Suppose $\widehat{\Ki}\varphi\in \Gamma$. We will construct a $\dgg $ such that 
\[
\gff R^c \dgg \text{ and } \varphi\in \Delta.
\]
Let $\Delta^-$ be $\{\varphi\}\cup\{\KWi\psi\mid \KWi\psi\in \Gamma\}\cup\{\chi\mid\Ki\chi\in \Gamma\}$. Then $\Delta^-$ is consistent. Suppose not, then there are $\KWi\psi_1,\cdots,\KWi\psi_m, \chi_1,\cdots, \chi_n\in \Delta^-$ such that 
\[
\vdash_{\SKy}\KWi\psi_1\wedge\cdots\wedge\KWi\psi_m\wedge\chi_1\wedge\cdots\wedge \chi_n\rightarrow \neg\varphi.
\]
Then 
\[
\vdash_{\SKy}\Ki(\KWi\psi_1\wedge\cdots\wedge\KWi\psi_m\wedge\chi_1\wedge\cdots\wedge \chi_n)\rightarrow\Ki \neg\varphi.
\]
Since 
\begin{multline*}
\vdash_{\SKy}(\Ki\KWi\psi_1\wedge\cdots\wedge\Ki\KWi\psi_m\wedge\Ki\chi_1\wedge\cdots\wedge\Ki \chi_n)\rightarrow\\ \Ki(\KWi\psi_1\wedge\cdots\wedge\KWi\psi_m\wedge\chi_1\wedge\cdots\wedge \chi_n),
\end{multline*}
by propositional resoning, 
\[
\vdash_{\SKy}(\Ki\KWi\psi_1\wedge\cdots\wedge\Ki\KWi\psi_m\wedge\Ki\chi_1\wedge\cdots\wedge\Ki \chi_n)\rightarrow \Ki\neg\varphi.
\]
By $\KWi\psi_j\in \Gamma$ and axiom \transYK, we have  $\Ki\KWi\psi_j\in \Gamma$. Since $\Ki\chi_j\in \Gamma$, it follows that $\Ki\neg\varphi\in \Gamma$, i.e., $\neg\widehat{\Ki}\varphi\in \Gamma$. But this is impossible: $\Gamma$ is an MCS containing $\widehat{\Ki}\varphi$. We conclude that $\Delta^-$ is consistent.

Let $\Delta$ be any MCS containing $\Delta^-$, such extensions exist by a Lindenbaum-like argument. It follows that for any  $\KWi\varphi$, $\KWi\varphi\in \Gamma$ iff $\KWi\varphi\in \Delta$: 
\begin{itemize}
\item Suppose $\KWi\varphi\in \Gamma$. By the construction of $\Delta$, we have $\KWi\varphi\in \Delta$.
\item Suppose $\KWi\varphi\in \Delta$ and $\KWi\varphi\not\in \Gamma$. By the property of MCS, we have $\neg\KWi\varphi\in \Gamma$. By Proposition~\ref{prop:5YK}, we have $\Ki\neg\KWi\varphi\in \Gamma$. By the construction of $\Delta$, we have $\neg\KWi\varphi\in \Delta$. Contradiction.
\end{itemize}
In the following, we construct $G$ and $\vv{g}$ to form a world $\lr{\Delta, G, \vv{g}}$ in $W^c$. Based on the above result, we can simply let $g_i=f_i$.
We just need to construct $g_j$ for $j\not=i$. Formally, let:  
\begin{itemize}
\item $G_0=F\cup \{\lr{\varphi,\varphi}\mid \KWj\varphi\in \Delta \text{ for some }j\not=i\}$
\item $G_{n+1} = G_n\cup \{\lr{s\cdot t, \psi}\mid \lr{s,\varphi\rightarrow \psi}, \lr{t, \varphi}\in G_n\}$
\item $G=\bigcup_{n\in \mathbb{N}} G_n$
\end{itemize}
\[g_j(\varphi)=\left\{\begin{array}{ll}
f_j(\varphi)&\text{\qquad$j = i$},\\
\varphi & \text{\qquad $j\not =i$}.
\end{array}\right.\]
Since $F\subseteq G$ and $G$ is closed under implication, conditions (i) and (ii) are obvious. For condition (iii), if $\KWi\varphi\in \Delta$, then $\KWi\varphi\in \Gamma$. Thus 
\[
\lr{g_i(\varphi),\varphi}=\lr{f_i(\varphi),\varphi}\in F\subseteq G. 
\]
Condition (iii) also holds if $\KWj\varphi\in \Delta$ for $j\not=i$ by definition of $G_0$. It follows that $\dgg \in W^c$. By the construction of $\dgg$, we have $\varphi\in \Delta$, $\Gamma^\#_i\subseteq \Delta$, and $f_i=g_i$. Therefore there exists a state $\dgg \in W^c$ such that $\gff R^c_i\dgg $ and $\varphi\in \Delta$. 
\end{proof}

To refute $\KWi\psi$ semantically, for each explanation $t$ for $\psi$ at the current world, we need to construct an accessible world where $t$ is not an explanation for $\psi$. This leads to the following lemma. 

\begin{lem}[$\KWi$ Existence Lemma]\label{kwlm-kw-existence-lemma}
For any $\gff\in W^c$, if $\KWi\psi\not\in \Gamma$ then for any $\lr{t,\psi}\in F$, there exists $\dgg\in W^c$ such that $\lr{t,\psi}\not\in G$ and $\gff R^c_i \dgg$.
\end{lem}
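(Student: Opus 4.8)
The plan is to refute clause (1) in the semantics of $\KWi\psi$ by exhibiting, for the given $t$, one $R^c_i$-successor of $\gff$ at which $t$ fails to explain $\psi$. Since $\KWi\psi\notin\Gamma$, maximal consistency of $\Gamma$ gives $\neg\KWi\psi\in\Gamma$. I would keep the underlying MCS unchanged, taking $\Delta=\Gamma$ and $g_i=f_i$; this already secures the relational constraints, because $\Gamma^\#_i\subseteq\Gamma$ holds by axiom \axT\ (so clause (1) of $R^c_i$ is met) and $f_i=g_i$ is clause (2). What remains is to build a component $G$ (together with witnesses $g_j$ for $j\neq i$) that satisfies conditions (i)--(iii) of $W^c$ yet omits the single pair $\lr{t,\psi}$, so that the resulting world $\dgg$ differs from $\gff$ only in its explanation bookkeeping.

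The device I would use is a \emph{fresh term}. Because $t$ has only finitely many subterms while there are infinitely many formulas, I can fix a formula $\theta$ (viewed as an atomic term of $E^c$) that does not occur as a subterm of $t$. I then set $g_j(\varphi)=\theta$ for every $j\neq i$ and every $\varphi$ with $\KWj\varphi\in\Gamma$, and let $G$ be the closure under application of the base pairs demanded by conditions (ii) and (iii)---namely $\lr{e,\varphi}$ for $\varphi\in\Lambda$, $\lr{f_i(\varphi),\varphi}$ for $\KWi\varphi\in\Gamma$, and $\lr{\theta,\varphi}$ for $\KWj\varphi\in\Gamma$ with $j\neq i$---constructed in stages $G_n$ exactly as in Lemma \ref{kwlm-k-existence-lemma}. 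Conditions (i)--(iii) then hold by construction, and $\gff R^c_i\dgg$ holds as noted, so $\dgg\in W^c$ is a genuine successor.

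The crux is to show $\lr{t,\psi}\notin G$, and I would prove the stronger invariant: for every $\lr{r,\xi}\in G$, if $\theta$ is not a subterm of $r$, then $\KWi\xi\in\Gamma$. This goes by induction on the stage at which $\lr{r,\xi}$ enters $G$. For the base pairs, $\lr{e,\varphi}$ with $\varphi\in\Lambda$ gives $\KWi\varphi\in\Gamma$ by \NECY; $\lr{f_i(\varphi),\varphi}$ already carries $\KWi\varphi\in\Gamma$; and the foreign pairs $\lr{\theta,\varphi}$ satisfy the hypothesis vacuously, since their first coordinate \emph{is} $\theta$. For the application step producing $\lr{s\cdot u,\chi'}$ from $\lr{s,\chi\to\chi'}$ and $\lr{u,\chi}$: if $s\cdot u$ avoids $\theta$ then so do $s$ and $u$, the induction hypothesis yields $\KWi(\chi\to\chi')\in\Gamma$ and $\KWi\chi\in\Gamma$, and \DISTY\ together with modus ponens inside the MCS delivers $\KWi\chi'\in\Gamma$. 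Applying the invariant to $\lr{t,\psi}$: as $\theta$ is not a subterm of $t$, membership $\lr{t,\psi}\in G$ would force $\KWi\psi\in\Gamma$, contradicting our assumption; hence $\lr{t,\psi}\notin G$, as required.

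The main obstacle is precisely this invariant, which has to disentangle the two reasons a pair can be generated: an \emph{honest} $i$-explanation, faithfully tracked by $\KWi$-membership through \NECY\ and \DISTY, versus a \emph{foreign} $j$-explanation ($j\neq i$), which could in principle conspire through the application rule to rebuild $\lr{t,\psi}$. Quarantining all foreign explanations behind the single fresh term $\theta$ is what guarantees that any pair whose term is $\theta$-free is purely an $i$-explanation, so that the offending pair $\lr{t,\psi}$ can enter $G$ only if agent $i$ already knows why $\psi$. I expect no difficulty in the routine verification that $\dgg\in W^c$ and that $\gff R^c_i\dgg$; the content is entirely in the choice of $\theta$ and the proof of the invariant.
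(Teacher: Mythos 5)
Your proof is correct, and it reaches the conclusion by a genuinely different construction from the paper's. The paper keeps the old component $F$ essentially intact: it isolates $\Psi=\{\lr{s,\varphi}\in F\mid \KWi\varphi\notin\Gamma\}$, replaces each such pair by $\lr{t\cdot s,\varphi}$, closes under application, and proves (by induction on the closure stages) that every pair $\lr{s,\varphi}\in G$ with $\KWi\varphi\notin\Gamma$ has $t$ as a \emph{proper} subterm of $s$, so $\lr{t,\psi}$, whose term is $t$ itself, can never appear. You instead discard $F$ wholesale, rebuild $G$ from the minimal base forced by conditions (ii) and (iii) of $W^c$, quarantine the witnesses of the other agents behind a fresh atomic term $\theta$ not occurring in $t$, and prove the dual invariant that every $\theta$-free pair $\lr{r,\xi}\in G$ satisfies $\KWi\xi\in\Gamma$. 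Both invariants are established by the same induction on the stage at which a pair enters the closure, and both ultimately rest on \NECY\ and \DISTY\ to show that any pair regenerated from purely ``honest'' $i$-explanations is already tracked by a $\KWi$-formula in $\Gamma$; the difference lies only in how the dangerous pairs are marked (lengthening by prefixing $t$, versus tagging with a fresh $\theta$). Your smaller base makes the invariant slightly cleaner to verify (you need no analogue of the paper's preliminary observation about $G_0$, since every non-$i$ base pair is vacuously tagged), whereas the paper's successor preserves all explanations recorded in $F$ (with possibly longer terms) and so stays closer to the original world --- though nothing in the truth lemma requires that. Your verification of $\gff R^c_i\dgg$ via $\Delta=\Gamma$, \axT, and $g_i=f_i$ matches the paper's, and your claim that a suitable $\theta$ exists is sound because $\BP$ is infinite while $t$ has only finitely many subterms.
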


\begin{proof}
Suppose $\KWi\psi\not\in \Gamma$, $\gff\in W^c$, and $\lr{t,\psi}\in F$. We construct $\dgg$ as follows: 
\begin{itemize}
\item $\Delta=\Gamma$
\item $\Psi=\{\lr{s,\varphi}\mid \lr{s,\varphi}\in F$ and $\KWi\varphi\not\in \Gamma\}$
\item $\Psi^\prime=\{\lr{t\cdot s,\varphi}\mid \lr{s,\varphi}\in\Psi\}$
\item $G_0= (F\backslash\Psi)\cup \Psi^\prime $
\item $G_{n+1}=G_n\cup \{\lr{r\cdot s, \varphi_2}\mid \lr{r,\varphi_1\rightarrow \varphi_2},\lr{s,\varphi_1}\in G_n\}$
\item $G=\bigcup_{n\in \mathbb{N}}G_n$ 
\item For each $j\in \A$, $g_j: \{\varphi\mid \KWj\varphi\in \Delta\}\to E^c$ is defined as: 
\begin{eqnarray*}
g_j(\varphi)=
\begin{cases}
f_j(\varphi), & \lr{f_j(\varphi),\varphi}\not\in \Psi \cr t\cdot f_j(\varphi), & \lr{f_j(\varphi),\varphi}\in \Psi 
\end{cases}
\end{eqnarray*}
\end{itemize}
Throughout this proof we write $\abs{s}>\abs{t}$ to express that $t$ is proper subterm of $s$.
From the construction of $G$, it is clear that 
for any $\lr{s,\varphi}\in \Psi^\prime$, we have $\abs{s}>\abs{t}$. We can show that for any $\KWi\phi\not\in\Gamma$, if  $\lr{s,\varphi}\in G_0$ then $\lr{s,\phi}\in \Psi'$. Towards contradiction, suppose that $\KWi\phi\not\in\Gamma$ and  $\lr{s,\phi}\in F\backslash\Psi$, then $\lr{s,\phi}\in F$, thus $\lr{s,\phi}\in \Psi$ by the definition of $\Psi$, contradiction.  It follows: 
\begin{equation}\label{small:G0:1}
\text{For any $\KWi\phi\not\in\Gamma$, if  $\lr{s,\varphi}\in G_0$ for some $s$, then $\abs{s}>\abs{t}$.}
\end{equation}

Thus in particular
\begin{equation}\label{eq:G0:1}
\lr{t, \psi} \text{ is not in } G_0.
\end{equation}

The idea behind the construction of $G$ is to first replace any current explanation for $\psi$ with something longer, and then take the closure w.r.t. implication. Note that for technical convenience, we treat all $\varphi$ such that $\KWi\varphi\not\in \Gamma$ in the basic step together. 

Now we prove the following claims. \\
\noindent\textbf{Claim 1} $\dgg\in W^c$. i.e., $G$ satisfies the conditions in the definition of $W^c$.
\begin{enumerate}[(i)]
\item Suppose $\lr{r, \varphi_1\rightarrow\varphi_2},\lr{s,\varphi_1}\in G$. By the construction of $G$, there exists $n\in \mathbb{N}$ such that $\lr{r, \varphi_1\rightarrow\varphi_2},\lr{s,\varphi_1}\in G_n$. By the construction of $G_{n+1}$, we have $\lr{r\cdot s,\varphi_2}\in G_{n+1}$. Thus $\lr{r\cdot s,\varphi_2}\in G$.
\item Suppose $\varphi\in \Lambda$. Then $\lr{e,\varphi}\in F$. Since $\varphi$ is a tautology, by \NECY\ and the property of MCS, we have $\KWi\varphi\in \Gamma$. Thus $\lr{e,\varphi}\not\in \Psi$. Thus $\lr{e,\varphi}\in G_0$. Hence $\lr{e,\varphi}\in G$.
\item Suppose $\KWj\varphi\in \Delta\ (j\in \A)$. By $\Delta=\Gamma$, we get $\KWj\varphi\in\Gamma$. Thus $\lr{f_j(\varphi),\varphi}\in F$. We have two cases:
\begin{itemize}
\item $ \lr{f_j(\varphi),\varphi}\not\in \Psi$: Thus $g_j(\varphi)=f_j(\varphi)$. Thus we have  $\lr{g_j(\varphi),\varphi}\in F$ and $ \lr{g_j(\varphi),\varphi}\not\in \Psi$. Thus  $\lr{g_j(\varphi),\varphi}\in G_0$. Hence $\lr{g_j(\varphi),\varphi}\in G$.
\item $ \lr{f_j(\varphi),\varphi}\in \Psi$: Thus $g_j(\varphi)=t\cdot f_j(\varphi)$ and $\lr{g_j(\varphi),\varphi}\in \Psi^\prime$. Thus we have  $\lr{g_j(\varphi),\varphi}\in G_0$. Hence $\lr{g_j(\varphi),\varphi}\in G$.
\end{itemize}
\end{enumerate}

\noindent\textbf{Claim 2} $\gff R^c_i \dgg$

To prove this claim, we just need to check two conditions:
\begin{enumerate}[(i)]
\item Since $\Delta=\Gamma$, obviously, we have $\Gamma^\#_i\subseteq \Delta$.
\item Since $\Delta=\Gamma$, we have $\{\varphi\mid \KWi\varphi\in \Gamma\}=\{\varphi\mid \KWi\varphi\in \Delta\}$, i.e., dom($g_i$)=dom($f_i$). For any $\varphi\in \{\varphi\mid \KWi\varphi\in \Delta\}$, since $\lr{f_i(\varphi),\varphi}\not\in \Psi$, by the definition of $g_i$, we have $g_i(\varphi)=f_i(\varphi)$. Hence $g_i=f_i$.
\end{enumerate}

To prove $\lr{t,\psi}\not\in G$, we first prove the following useful claim:\\
\noindent\textbf{Claim 3} If $\KWi\varphi\not\in \Gamma$ and $\lr{s,\varphi}\in G_{n+1}\backslash G_n$, then $\abs{s}>\abs{t}$.\\
Suppose $\KWi\varphi\not\in \Gamma$. Do induction on $n$:
\begin{itemize}
\item $n=0$. Suppose $\lr{s,\varphi}\in G_1\backslash G_0$. Then there exists $s_1, s_2$, and $\chi$ such that $s=s_1\cdot s_2$, $\lr{s_1,\chi\rightarrow\varphi},\lr{s_2,\chi}\in G_0$. We have two cases\jst 
\begin{itemize}
\item $\lr{s_1,\chi\rightarrow\varphi}\in \Psi^\prime$ or $\lr{s_2,\chi}\in \Psi^\prime$: Thus $\abs{s_1}>\abs{t}$ or $\abs{s_2}>\abs{t}$. Thus $\abs{s}>\abs{t}$.
\item $\lr{s_1,\chi\rightarrow\varphi}\not\in \Psi^\prime$ and $\lr{s_2,\chi}\not\in \Psi^\prime$: Since $\lr{s_1,\chi\rightarrow\varphi},\lr{s_2,\chi}\in G_0$, thus $\lr{s_1,\chi\rightarrow\varphi},\lr{s_2,\chi}\in F\backslash \Psi$. Thus $\KWi(\chi\rightarrow\varphi),\KWi\chi\in \Gamma$. Thus $\KWi\varphi\in \Gamma$ by axiom \DISTY. Contradiction.
\end{itemize}
\item $n > 0$. 
Suppose $\lr{s,\varphi}\in G_{n+1}\backslash G_n$. 
Then there exist $s_1,s_2,\chi$ such that $s=s_1\cdot s_2$ and 
$\lr{s_1,\chi\rightarrow \varphi},\lr{s_2,\chi}\in G_{n}$.
Moreover, we find that
\[
\KWi(\chi\rightarrow \varphi) \not\in \Gamma  \text{ or } \KWi\chi\not\in \Gamma
\]
since $\KWi\varphi\not\in \Gamma$ and $\Gamma$ is an MCS.
We also have 
\[
\lr{s_1,\chi\rightarrow\varphi}\not\in G_{n-1} \text{ or } \lr{s_2,\chi}\not\in G_{n-1}
\]
since otherwise $\lr{s,\varphi}\in G_n$ by the definition of $G_n$. Then we have  
\[
\lr{s_1,\chi\rightarrow \varphi}\in G_n\backslash G_{n-1} \text{ or }\lr{s_2,\chi}\in G_n\backslash G_{n-1}.
\]
We have the following cases:
\begin{itemize}
\item $\KWi(\chi\rightarrow \varphi) \not\in \Gamma$ and $\lr{s_1,\chi\rightarrow \varphi}\in G_n\backslash G_{n-1}$. By IH, we have $\abs{s_1}>\abs{t}$. Hence $\abs{s}>\abs{t}$.
\item $\KWi\chi\not\in \Gamma$ and $\lr{s_2,\chi}\in G_n\backslash G_{n-1}$. By IH, we have $\abs{s_2}>\abs{t}$. Hence $\abs{s}>\abs{t}$.
\item $\KWi(\chi\rightarrow \varphi) \not\in \Gamma$ and $\lr{s_1,\chi\rightarrow \varphi}\in G_{n-1}$. If $\lr{s_1,\chi\rightarrow \varphi}\in G_0$, then we have $\abs{s_1}>\abs{t}$ by \eqref{small:G0:1}; If  $\lr{s_1,\chi\rightarrow \varphi}\not\in G_0$, then there exists $0 < k<n$ such that $\lr{s_1,\chi\rightarrow \varphi}\in G_k\backslash G_{k-1}$. Thus by IH we have $\abs{s_1}>\abs{t}$. Thus $\abs{s}>\abs{t}$.
\item $\KWi\chi\not\in \Gamma$ and $\lr{s_2,\chi}\in G_{n-1}$. Similar to the above.
\end{itemize}

\end{itemize}

\noindent\textbf{Claim 4} $\lr{t,\psi}\not\in G$. 

According to the construction of $G$, we just need to show that for all \mbox{$n\in \mathbb{N}$}, $\lr{t,\psi}\not\in G_n$. 
By~\eqref{eq:G0:1}, we already know $\lr{t,\psi}\not\in G_0$.
Based on Claim~3, $\lr{t,\psi}$ cannot be added in any $G_n$ for $n\geq 1.$ 
We conclude  $\lr{t,\psi}\not\in G$. 
\end{proof}

Finally we are ready to prove the truth lemma.
\begin{lem}[Truth Lemma]\label{kwlm-truth-lemma}
For all $\varphi$, $\gff \vDash\varphi$  if and only if $ \varphi\in \Gamma$.
\end{lem}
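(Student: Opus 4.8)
The plan is to prove the Truth Lemma by induction on the structure of $\varphi$. The atomic case is immediate from the definition of $V^c$, and the Boolean cases ($\neg$ and $\wedge$) follow routinely from the induction hypothesis together with the standard closure properties of maximal consistent sets. The two modal cases are where the work lies.

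For the $\Ki\psi$ case I would argue as in the standard canonical-model completeness proof. If $\Ki\psi \in \Gamma$, then by the definition of $R^c_i$ every successor $\dgg$ contains $\psi$ (since $\psi \in \Gamma^\#_i \subseteq \Delta$), so the induction hypothesis gives $\dgg\vDash\psi$ at every successor, whence $\gff\vDash\Ki\psi$. Conversely, if $\Ki\psi\notin\Gamma$ then $\widehat{\Ki}\neg\psi\in\Gamma$, and the $\Ki$ Existence Lemma (Lemma \ref{kwlm-k-existence-lemma}) yields a successor $\dgg$ with $\neg\psi\in\Delta$; by the induction hypothesis that successor falsifies $\psi$, so $\gff\not\vDash\Ki\psi$.

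The $\KWi\psi$ case is the main obstacle, because the truth condition for $\KWi$ is the conjunction of an existential clause over explanations (clause (1)) and a universal ``knowing that'' clause (clause (2)). For the direction $\KWi\psi\in\Gamma \Rightarrow \gff\vDash\KWi\psi$, I would take $f_i(\psi)$ as the witnessing explanation: condition (iii) in the definition of $W^c$ guarantees $\lr{f_i(\psi),\psi}\in F$, and since every $R^c_i$-successor $\dgg$ satisfies $g_i=f_i$ and (by Proposition \ref{kwlm-facilitate}) $\KWi\psi\in\Delta$, condition (iii) applied to $\dgg$ gives $\lr{g_i(\psi),\psi}=\lr{f_i(\psi),\psi}\in G$, so clause (1) holds uniformly at all successors; clause (2) follows from \PRES, which puts $\Ki\psi\in\Gamma$, hence $\psi$ into every successor and, by the induction hypothesis, makes $\psi$ true there.

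For the converse I assume $\KWi\psi\notin\Gamma$ and split on whether $\Ki\psi\in\Gamma$. If $\Ki\psi\notin\Gamma$, I refute clause (2) exactly as in the $\Ki$ case, producing a successor falsifying $\psi$. If $\Ki\psi\in\Gamma$ (so clause (2) cannot be attacked), I must refute the existential clause (1), i.e.\ show that \emph{every} candidate explanation $t$ fails at some successor. Since the negation of $\exists t\,\forall v$ is $\forall t\,\exists v$, it suffices to treat each $t$ separately: if $\lr{t,\psi}\notin F$, reflexivity of $R^c_i$ lets me take the successor to be $\gff$ itself; if $\lr{t,\psi}\in F$, the $\KWi$ Existence Lemma (Lemma \ref{kwlm-kw-existence-lemma}) supplies a successor $\dgg$ with $\lr{t,\psi}\notin G$. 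In both cases $t$ is not a uniform explanation, so clause (1) fails and $\gff\not\vDash\KWi\psi$. I expect the delicate point to be precisely this matching of the existential/universal quantifier structure of clause (1) with the per-term refutation provided by the $\KWi$ Existence Lemma: that lemma was engineered (replacing each current explanation of $\psi$ by a strictly longer term and then closing under application) so that a single successor eliminates a given $t$, which is exactly what the contrapositive of clause (1) demands.
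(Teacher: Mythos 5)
Your proposal is correct and follows essentially the same route as the paper's proof: the same induction, the same use of the $\Ki$ Existence Lemma for the box case, the witness $f_i(\psi)$ together with condition (iii) and $g_i=f_i$ for the positive $\KWi$ direction, and the same case split (on $\Ki\psi\in\Gamma$, then on whether $\lr{t,\psi}\in F$) with the per-term $\KWi$ Existence Lemma for the negative direction. Your explicit appeal to reflexivity for terms $t$ with $\lr{t,\psi}\notin F$ is exactly the step the paper leaves implicit.
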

\begin{proof}
This is established by standard induction on the complexity of~$\varphi$. The atomic cases and the boolean cases are standard. The case when  $\varphi=\Ki\psi$ is also routine based on Lemma \ref{kwlm-k-existence-lemma}.


Consider the case that $\varphi$ is $\KWi\psi$ for some $\psi$.
\begin{itemize}
\item[$\Leftarrow$] If $\KWi\psi\in \Gamma$, then for any $\dgg$ such that $\gff  R^c_i\dgg $, we have then $ \KWi\psi\in \Delta$ by the definition of $R^c_i$. Since $\vdash_{\SKy}\KWi\psi\rightarrow\psi$ (by \axT\ and \PRES), we have $\psi\in \Delta$.  By IH, we have \mbox{$\dgg \vDash\psi$}. Since $\KWi\psi\in \Gamma$ and  $\KWi\psi\in \Delta$, we have $\lr{f_i(\psi),\psi}\in F$ and $\lr{g_i(\psi),\psi}\in G$. By the definition of $R^c_i$, we have $f_i=g_i$. Thus there exists $g_i(\psi)=f_i(\psi)\in E^c$ such that $\dgg\in \E^c(g_i(\psi),\psi)$. Therefore we conclude $\gff \vDash \KWi\psi$.
\item[$\Rightarrow$] Suppose $\KWi\psi\not\in \Gamma$. We have two cases as follows: 
\begin{itemize}
\item $\K_i\psi\not\in \Gamma$: then by Lemma \ref{kwlm-k-existence-lemma} and the semantics, $\gff\not\vDash \KWi\psi$.
\item $\K\psi\in \Gamma$: We also have two cases:
\begin{itemize}
\item $\lr{t,\psi}\not\in F$ for all $t\in E$. By the semantics, $\gff\not\vDash \KWi\psi$.
\item There exists $t\in E$ such that $\lr{t,\psi}\in F$. By Lemma \ref{kwlm-kw-existence-lemma}, there exists $\dgg\in W^c$ with $\lr{t,\psi}\not\in G$ and $\gff R^c_i \dgg$. Hence we have $\gff\nvDash \KWi\psi$.
\end{itemize}
\end{itemize}
\end{itemize}
\end{proof}
\begin{thm}[Completeness of $\SKy$ over $\bbc$]\label{kwlm-c-completeness}
$\Sigma\vDash_{\bbc} \varphi$ implies $\Sigma\vdash_{\SKy} \varphi$.
\end{thm}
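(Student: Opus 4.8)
The plan is to argue by contraposition, producing a countermodel from the canonical model $\M^c$ already constructed. Suppose $\Sigma \nvdash_{\SKy} \varphi$. Then $\Sigma \cup \{\neg\varphi\}$ is $\SKy$-consistent, since otherwise $\Sigma \vdash_{\SKy} \varphi$ by propositional reasoning. By a standard Lindenbaum argument, extend $\Sigma \cup \{\neg\varphi\}$ to a maximal $\SKy$-consistent set $\Gamma \in \Omega$, so that $\Sigma \subseteq \Gamma$ and $\neg\varphi \in \Gamma$ (hence $\varphi \notin \Gamma$).

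Next I would locate a world of $\M^c$ whose first component is $\Gamma$. By Proposition \ref{kwlm-F-existence}, there exist $F^\Gamma$ and $\vv{f}^\Gamma$ with $\lr{\Gamma, F^\Gamma, \vv{f}^\Gamma} \in W^c$. Since the preceding propositions established that $\M^c$ is a genuine $\ELKy$-model --- $\E^c$ satisfies conditions (I) and (II), each $R^c_i$ is an equivalence relation, and $W^c$ is non-empty --- we have $\M^c \in \bbc$.

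Then I would apply the Truth Lemma (Lemma \ref{kwlm-truth-lemma}) at this world: for every formula $\psi$, $\lr{\Gamma, F^\Gamma, \vv{f}^\Gamma} \vDash \psi$ iff $\psi \in \Gamma$. Since $\Sigma \subseteq \Gamma$, every formula of $\Sigma$ holds there, while $\varphi \notin \Gamma$ gives $\lr{\Gamma, F^\Gamma, \vv{f}^\Gamma} \nvDash \varphi$. Thus $\M^c$ together with $\lr{\Gamma, F^\Gamma, \vv{f}^\Gamma}$ witnesses $\Sigma \nvDash_{\bbc} \varphi$, which establishes the contrapositive of the claim.

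I expect no genuine obstacle at this final stage: all the real work has already been absorbed into the existence lemmas and the Truth Lemma. In particular, the delicate part was Lemma \ref{kwlm-kw-existence-lemma} (the $\KWi$ existence lemma), whose subterm-length bookkeeping in Claim 3 guarantees that the replacement explanation $t\cdot(-)$ never collapses back so as to spoil $\lr{t,\psi} \notin G$. Granting that lemma and the Truth Lemma, the completeness theorem is a routine assembly of the Lindenbaum extension with the canonical-model evaluation, and the only points worth double-checking are that the consistency argument for $\Sigma\cup\{\neg\varphi\}$ is purely propositional and that $\M^c\in\bbc$ follows from the earlier well-definedness propositions rather than requiring any new verification.
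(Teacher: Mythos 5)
Your proposal is correct and follows essentially the same route as the paper: take a consistent $\Sigma\cup\{\neg\varphi\}$, extend it by Lindenbaum to an MCS $\Gamma$, invoke Proposition~\ref{kwlm-F-existence} to obtain a world $\lr{\Gamma, F^\Gamma, \vv{f}^\Gamma}\in W^c$, and apply the Truth Lemma to refute $\Sigma\vDash_\bbc\varphi$; the paper merely phrases this as a direct contradiction rather than a contraposition. No gaps.
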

\begin{proof}
Suppose $\Sigma\vDash_{\bbc} \varphi$. Towards a contradiction, suppose $\Sigma\not\vdash_{\SKy} \varphi$. Then $\Sigma\cup \{\neg\varphi\}$ is consistent. Extend $\Sigma\cup\{\neg\varphi\}$ to a maximal consistent set $\Gamma$. By Proposition \ref{kwlm-F-existence}, there exist $F$ and $\vv{f}$ such that $\gff \in W^c$. By Lemma \ref{kwlm-truth-lemma}, we have $\gff \vDash\Sigma\cup \{\neg\varphi\}$, thus $\Sigma\cup \{\neg\varphi\}$ is satisfiable, thus $\Sigma\vDash_{\bbc} \varphi$ is false. Contradiction. 
\end{proof}

By Theorem \ref{thm.candcf} and Theorem \ref{kwlm-c-completeness}, we have the following corollary.
\begin{corol}[Completeness of $\SKy$ over $\bbc_{F}$]
$\Sigma\vDash_{\bbc_{F}} \varphi$ implies \mbox{$\Sigma\vdash_{\SKy} \varphi$}.
\end{corol}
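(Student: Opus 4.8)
The plan is to obtain this as an immediate consequence of the two results just established, using the factivity-elimination theorem to transport completeness from the class $\bbc$ down to its subclass $\bbc_{F}$; no new canonical model or model surgery is needed. The whole argument is a two-link chain of implications.

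Concretely, I would begin by assuming $\Sigma\vDash_{\bbc_{F}}\varphi$ and reducing this to validity over the full class. Since $\bbc_{F}\subseteq\bbc$, the ``easy'' direction is the wrong one here; the content I actually need is that validity over the \emph{smaller} class $\bbc_{F}$ already forces validity over the \emph{larger} class $\bbc$. This is exactly the ``$\Leftarrow$'' direction of Theorem~\ref{thm.candcf}, whose proof used Lemma~\ref{kwl-c-cf-model-corres} to convert any countermodel $\M\in\bbc$ into a factive countermodel $\M^{F}\in\bbc_{F}$ satisfying the same $\ELKy$-formulas at the same world. Applying that biconditional to the set $\Sigma\cup\{\varphi\}$ yields $\Sigma\vDash_{\bbc}\varphi$.

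Having reduced to the full class, I would then invoke completeness of $\SKy$ over $\bbc$ (Theorem~\ref{kwlm-c-completeness}), which gives $\Sigma\vdash_{\SKy}\varphi$ directly. Chaining the two steps, $\Sigma\vDash_{\bbc_{F}}\varphi$ implies $\Sigma\vDash_{\bbc}\varphi$ implies $\Sigma\vdash_{\SKy}\varphi$, which is the desired conclusion.

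There is essentially no obstacle: all the substantive work was already carried out in the construction of $\M^{F}$ and in the canonical model argument for $\bbc$. The only point demanding care is that one must invoke the \emph{correct} direction of the equivalence in Theorem~\ref{thm.candcf} -- semantic consequence over a subclass does not in general imply semantic consequence over a superclass, and it is precisely the factivity-neglectability established earlier (via the $\M\mapsto\M^{F}$ map) that legitimizes this step for the particular subclass $\bbc_{F}$.
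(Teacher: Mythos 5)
Your proof is correct and matches the paper's own argument exactly: the paper likewise derives this corollary by combining Theorem~\ref{thm.candcf} (factivity-neglectability, used in the direction from $\vDash_{\bbc_F}$ to $\vDash_{\bbc}$) with Theorem~\ref{kwlm-c-completeness}. Your remark about needing the correct direction of the equivalence is apt but raises no issue.
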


Now let us look at the completeness of $\SKyi$. The crucial observation is that we can use the same canonical model definition except now we let $\Omega$ be the set of all maximal $\SKyi$-consistent set of $\ELKy$ formulas. The similar propositions follow due to Proposition \ref{kwlm-five-prop}. The only extra thing is to check whether the new canonical model has the introspection property. 
\begin{prop}
$\M^c$ has introspection property.
\end{prop}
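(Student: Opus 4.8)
The plan is to collapse all four cases into the single fact that $\KWi\varphi \in \Gamma$, after which the witness can be read straight off the canonical world. First I would apply the Truth Lemma (Lemma~\ref{kwlm-truth-lemma}): if $\gff \vDash \varphi$ and $\varphi$ has one of the shapes $\Ki\psi$, $\neg\Ki\psi$, $\KWi\psi$, $\neg\KWi\psi$, then $\varphi \in \Gamma$, so it suffices to reason about membership in the MCS $\Gamma$. Next, according to the shape of $\varphi$, I would invoke exactly one of the introspection axioms built into $\SKyi$: $\transKY$ when $\varphi = \Ki\psi$, $\eucKY$ when $\varphi = \neg\Ki\psi$, $\transY$ when $\varphi = \KWi\psi$, and $\eucY$ when $\varphi = \neg\KWi\psi$. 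In each case the axiom has the form $\varphi \to \KWi\varphi$, so modus ponens together with the maximal consistency of $\Gamma$ gives $\KWi\varphi \in \Gamma$.

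With $\KWi\varphi \in \Gamma$ in hand, I would extract the witness from the world structure itself. Since $\gff \in W^c$, condition~(iii) in the definition of $W^c$ forces $\lr{f_i(\varphi), \varphi} \in F$, so I set $t := f_i(\varphi) \in E^c$. It then remains to check that this one $t$ certifies $\varphi$ at every $R^c_i$-successor. Let $\dgg$ be arbitrary with $\gff R^c_i \dgg$. The analogue of Proposition~\ref{kwlm-facilitate} (which carries over to the $\SKyi$ canonical model) propagates $\KWi\varphi \in \Gamma$ to $\KWi\varphi \in \Delta$, so condition~(iii) applied to $\dgg$ yields $\lr{g_i(\varphi), \varphi} \in G$. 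The definition of $R^c_i$ requires $f_i = g_i$, hence $g_i(\varphi) = f_i(\varphi) = t$ and $\lr{t,\varphi} \in G$, i.e.\ $\dgg \in \E^c(t,\varphi)$. This is exactly the introspection property for $\M^c$.

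The one delicate point is the uniformity of the witness: the introspection property asks for a single fixed $t$ good at all $R^c_i$-successors, not a fresh witness per world. This is precisely why the witness functions $f_i$ were attached to each world and why $R^c_i$ was defined to demand $f_i = g_i$; together with the two-way transfer of $\KWi\varphi$ along $R^c_i$ furnished by Proposition~\ref{kwlm-facilitate}, these design choices are what make the chosen $t = f_i(\varphi)$ work simultaneously everywhere. I expect this coordination between $f_i$, $g_i$, and the membership of $\KWi\varphi$ to be the only real obstacle; the reduction to $\KWi\varphi \in \Gamma$ via the new axioms is immediate.
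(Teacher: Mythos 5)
Your proof is correct and follows essentially the same route as the paper: reduce all four cases to $\KWi\varphi\in\Gamma$ via the Truth Lemma and the axioms $\transKY$--$\eucY$, then produce a uniform witness at all $R^c_i$-successors. The only cosmetic difference is that the paper applies the Truth Lemma a second time to conclude $\gff\vDash\KWi\varphi$ and reads the witness off the semantic clause, whereas you unfold that step by hand via condition~(iii), $f_i=g_i$, and Proposition~\ref{kwlm-facilitate} --- which is exactly the argument inside the Truth Lemma's $\KWi$ case anyway.
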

\begin{proof}
Suppose $\gff\vDash\varphi$ and $\varphi$ has the form of $\Ki\psi$ or $\neg\Ki\psi$ or $\KWi\psi$ or $\neg\KWi\psi$. By Lemma \ref{kwlm-truth-lemma}, we have $\varphi\in \Gamma$. By the axioms \transKY-\eucY\ and the properties of MCS, we have $\KWi\varphi\in \Gamma$. By Lemma \ref{kwlm-truth-lemma}, we have $\gff\vDash \KWi\varphi$. Thus $\exists r\in E^c$, $\dgg\in \E^c(r,\varphi)$ for each $\dgg$ such that $\gff R^c_i\dgg$.
\end{proof}

Based on the above proposition and Theorem \ref{kwl-ci-cfi-equiv} we have:
\begin{thm}[Completeness of $\SKyi$ over $\bbc_{I}$ and $\bbc_{FI}$]\label{kwlm-ci-completeness}$\ $\\
If $\Sigma\vDash_{\bbc_{I}} \varphi$, then \mbox{$\Sigma\vdash_{\SKyi} \varphi$}. If $\Sigma\vDash_{\bbc_{FI}} \varphi$, then $\Sigma\vdash_{\SKyi}\varphi$.
\end{thm}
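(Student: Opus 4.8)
The plan is to re-run the canonical-model argument of Theorem \ref{kwlm-c-completeness}, now taking $\Omega$ to be the set of maximal $\SKyi$-consistent sets, and then to obtain the $\bbc_{FI}$ case for free via Theorem \ref{kwl-ci-cfi-equiv}.

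First I would check that every ingredient of the $\SKy$ completeness proof survives the switch to $\SKyi$. The definition of $\M^c$ is purely syntactic and unchanged, so I only need that the principles invoked along the way remain derivable. The proof that $R^c_i$ is an equivalence relation relied on \transK\ and \eucK, while the two existence lemmas and Proposition \ref{kwlm-facilitate} relied on \transYK\ and \eucYK; by Proposition \ref{kwlm-five-prop} all four of these are provable in $\SKyi$, so each well-definedness proposition, both existence lemmas, and the Truth Lemma \ref{kwlm-truth-lemma} go through verbatim (with $\vdash_{\SKy}$ replaced by $\vdash_{\SKyi}$). In particular $\M^c$ is again a genuine $\ELKy$-model, and Proposition \ref{kwlm-F-existence} still produces a world $\gff\in W^c$ for every $\Gamma\in\Omega$.

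Next I would invoke the preceding proposition, which shows that this $\M^c$ additionally enjoys the introspection property, so that $\M^c\in\bbc_I$. The completeness argument then mirrors Theorem \ref{kwlm-c-completeness}: assuming $\Sigma\vDash_{\bbc_I}\varphi$ but $\Sigma\not\vdash_{\SKyi}\varphi$, the set $\Sigma\cup\{\neg\varphi\}$ is $\SKyi$-consistent; extend it to a maximal consistent $\Gamma$, form the world $\gff$ via Proposition \ref{kwlm-F-existence}, and apply the Truth Lemma to conclude $\gff\vDash\Sigma\cup\{\neg\varphi\}$ inside $\M^c\in\bbc_I$, contradicting $\Sigma\vDash_{\bbc_I}\varphi$. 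This yields completeness over $\bbc_I$.

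Finally, the $\bbc_{FI}$ case is immediate: if $\Sigma\vDash_{\bbc_{FI}}\varphi$ then $\Sigma\vDash_{\bbc_I}\varphi$ by Theorem \ref{kwl-ci-cfi-equiv}, and the previous step gives $\Sigma\vdash_{\SKyi}\varphi$. I do not anticipate a serious obstacle, since the heavy lifting already resides in the canonical-model lemmas and in the introspection proposition; the only genuine care needed is the bookkeeping verification that each earlier proposition uses solely principles provable in $\SKyi$ rather than any feature peculiar to $\SKy$ — and Proposition \ref{kwlm-five-prop} is precisely what licenses that reuse.
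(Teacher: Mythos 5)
Your proposal matches the paper's own argument: the paper likewise re-instantiates the canonical model over maximal $\SKyi$-consistent sets, cites Proposition \ref{kwlm-five-prop} to carry over the earlier lemmas, uses the proposition that $\M^c$ has the introspection property to place it in $\bbc_I$, and settles the $\bbc_{FI}$ case via Theorem \ref{kwl-ci-cfi-equiv}. The proof is correct and takes essentially the same approach.
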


\section{Comparison with justification logic}
\label{sec.comp}

In this section, we compare our framework with justification logic. We first explain our deviations from the standard justification logic, and then give an alternative semantics of our logic $\SKy$, which is technically closer to the standard setting of justification logic.
\subsection{Similarities and differences}
The language of the most classic justification logic $\LP$ (i.e., $\mathbf{JT4}$ in \cite{artemov2008logic}) includes both formulas $\varphi$ and justification terms $t$: 
\begin{center}
$\varphi::= p\mid \neg\varphi \mid (\varphi\wedge\varphi)\mid {t\jst\varphi}$\\
$t::= x\mid c \mid (t\cdot t)\mid (t+t) \mid {!t}$
\end{center}

The possible-world semantics of justification logic is based on the Fitting model $\lr{S, R, \E, V}$ where $\lr{S, R, V}$ is a single-agent Kripke model and $\E$ is an \textit{evidence function} assigning justification terms $t$ to formulas on each world, just as in our setting. The formula $t\jst\varphi$ has the following semantics (cf. e.g., \cite{Fitting16}):
\begin{center}
\begin{tabular}{|lcl|}
\hline
$\M, w\Vdash t\jst\varphi$  & $\Longleftrightarrow$ &(a) $w\in \E(t, \varphi)$;\\
 &&(b) $v\Vdash \varphi$ for all $v$ such that $wRv$.\\
\hline 
\end{tabular}
\end{center}
Compared to our semantics for $\KWi\phi$, note that (a) only requires that $t$ is a justification of $\phi$ on the current world $w$. The Fitting models for $\LP$ are assumed to have further conditions:\footnote{The ``S5 version'' of justification logic $\mathbf{JT45}$ also adds another condition about negative introspection: $\overline{\E(t, \varphi)} \subseteq \E(?t, \neg (t\jst\varphi))$, and requires strong evidence, where $?$ is a new operation for justification terms in the language, cf. \cite{artemov2008logic}. To simplify the discussion, we focus on $\LP$ here.}
\begin{enumerate}
\item[(1)] $\E(s, \varphi \rightarrow \psi) \cap \E(t, \varphi) \subseteq \E(s\cdot t, \psi)$
\item[(2)] $\E(t,\varphi)\cup\E(s,\varphi) \subseteq \E(s+t,\varphi)$
\item[(3)] $\E(t,\varphi)\subseteq \E(!t, t\jst\varphi)$
\item[(4)] Monotonicity: $w\in\E(t,\varphi)$ and $wRv$ implies $v\in \E(t,\varphi).$
\item[(5)] $R$ is reflexive and transitive. 
\end{enumerate}
Note that we also require (1) and (5) above and include $\cdot$ as an operation on explanations in $E$ semantically. On the other hand, we leave out (2)(3)(4) and the operations $+$ and $!$ for specific considerations in our setting. For the case of $+$, consider the following  model where $\varphi$ has two possible explanations and agent $i$ cannot distinguish them (thus $\neg \KWi\varphi$ holds). 
$$
\xymatrix@R-15pt{
t\jst\varphi \ar@{-}[r]|i& s\jst \varphi\\
}
$$
If we impose condition (2) then $s+t$ is a uniform explanation of $\varphi$ on both worlds, which makes $\KWi\varphi$ true. More generally, for any finite model where $\varphi$ has some explanations on each world, $\KWi\varphi$ will always be true under condition (2), which is counterintuitive in our setting. Conceptually, the explanation should be \textit{precise}, you cannot explain a theorem by saying one of all the possible proofs up to a certain length works. Knowing there is a proof does not mean you know why the theorem holds.

Operation $!$ and conditions (3) and (4) are relevant to the validity of the axiom $t\jst\varphi\to {!t\jst(t\jst\varphi)}$ in the justification logic $\LP$, which is used to realize axiom \texttt{4} in modal logic. Intuitively, $!$ is the proof checker and $!t$ will always be a justification of $t\jst\phi$.\footnote{In the multi-agent setting, $!_i$ was introduced to capture the proof check done by each agent \cite{Yavorskaya2006}.} Although we do not have $t\jst\varphi$ in the language, it may sound reasonable to include $!$ and require $\E(t,\varphi)\subseteq \E(!t, \KWi\varphi)$. 
However, $t$ being an explanation
for $\varphi$ does not entail that $t$ can be transformed uniformly into an explanation for $\KWi\varphi$. For example, the window is broken since someone threw a rock at it, but there can be different explanations for an agent to know why the window is broken: she saw it, or someone told her about it, and so on. 

The technically motivated condition (4) in justification logic requires that 
any accessible possible world has more explanations than the actual world,
which is not reasonable in our setting: an undesired consequence of condition~(4) would be
$w\in \E(t,\varphi)$ and $w\vDash\K_i\varphi$ imply $w\vDash\KWi \varphi$.   


There are justification logics available with both $\Box \varphi$ and $t\jst\varphi$, see, e.g.,~\cite{ArtemovN05,KuStAiML12}. Justification terms are used to represent explicit knowledge whereas the $\Box$-operator is used for implicit knowledge. Hence these logics feature the principle   
\begin{equation}\label{eq:jyb:1}
t\jst\varphi \to \Box \varphi,
\end{equation}
which is based on the idea that one may implicitly know more than what is explicitly justified. 
Note that in the presence of the $!$-operation, the principle~\eqref{eq:jyb:1} implies
\begin{equation}\label{eq:jyb:2}
t\jst\varphi \to \Box {t\jst \varphi}.
\end{equation}
Indeed, by~\eqref{eq:jyb:1} we have 
$!t \jst  (t \jst \phi) \to \Box ( t \jst \phi)$, which
together with the axiom
$t \jst \phi \to {!t \jst  (t \jst \phi)}$ yields \eqref{eq:jyb:2}.
The formulas~\eqref{eq:jyb:1} and~\eqref{eq:jyb:2} correspond in our setting to the axioms
$\KW_i\varphi\to \K_i\varphi$ and $\KW_i\varphi\to \K_i \KW_i \varphi$, respectively.

In some versions of multi-agent justification logic, e.g.~\cite{BucKuzStu11JANCL,Renne2012}, the evidence function $\E$ is agent-dependent (or, equivalently, each agent has her own justifications), and correspondingly the formula $t \!\!:_i\! \phi$ is introduced into the language to express that $t$ is a justification of $\phi$ for $i$. However, in our models, we use a uniform function $\E$ for all agents since we think the explanatory relation between explanations and formulas is also part of the possible worlds, just like basic propositional facts, which are interpreted by a valuation function independent from the agents.   

Justification logics are parameterized by a constant specification (CS), a collection of $c \jst \varphi$ formulas where $c$ is a justification constant and $\phi$ is an axiom of the justification logic. 
It controls which axioms the logic provides justifications for, i.e.~which axioms an agent may use in her justified reasoning process.
A justification logic model meets the requirement of a given CS if $W=\E(c,\varphi)$ for all $c \jst \varphi\in \text{CS}$. 
In contrast, we include only tautologies (but not all axioms) in our tautology ground $\Lambda$. For example, if we had $(\K_i\varphi\to \varphi)\in\Lambda$, then we could derive $\KWi(\K_i\varphi\to \varphi)$ by \NECY, which would imply $\KWi\K_i\varphi \to \KWi\varphi$ by \DISTY. That, however, would be a strange consequence: e.g., I know why I know that the window is broken implies that I know why it is broken. 

  The table below highlights the similarities between our axioms (or derivable theorems in $\SKy$ and $\SKyi$) and axioms in (variants of) justification logic when viewing $t\jst\phi$ as $\KWi\phi$: 
\begin{center}
\begin{tabular}{|@{\quad}c@{\quad}|@{\quad}c@{\quad}|}
\hline 
 Justification Logic & Our work \\ 
\hline 
$t\jst(\varphi\to\psi)\to s\jst \varphi\to (t\cdot s)\jst \psi$ & $\KWi(\varphi\rightarrow \psi)\rightarrow(\KWi \varphi\rightarrow\KWi \psi)$  \\ 
\hline 
$t\jst\varphi \to (s+t)\jst\varphi$ & $\KWi\varphi\to\KWi\varphi$ \\ 
\hline 
$t\jst\varphi\to \varphi$ & $\KWi\varphi\to \varphi$ \\ 
\hline 
 $t\jst\varphi\to !t\jst(t\jst\varphi)$ & $\KWi\varphi\rightarrow \KWi\KWi\varphi$\\ 
\hline 
 $\neg t\jst\varphi\to ?t\jst(\neg {t\jst\varphi})$ & $\neg\KWi\varphi\rightarrow \KWi\neg\KWi\varphi$\\ 
\hline 
 $t\jst\varphi\to \Box\varphi$ \cite{ArtemovN05}& $\KWi \varphi\rightarrow\Ki \varphi$\\ 
\hline 
$t\jst\varphi\to \Box t\jst\varphi$ \cite{ArtemovN05} & $\KWi\varphi\to \Ki\KWi\varphi$ \\ 
\hline
$\neg t\jst\varphi\to \Box \neg t\jst\varphi$ \cite{ArtemovN05} & $\neg \KWi\varphi\to \Ki\neg \KWi\varphi$ \\ 
\hline 
\end{tabular} 
\end{center}

To close this comparison, note that Fitting proposed a quantified justification logic in \cite{Fitting08}, and discussed briefly in the end what can expressed if the language also includes the normal knowledge operator. Since $\KWi$ implicitly includes quantification over explanations, our language can then be viewed as a fragment of this quantified justification logic extended with $\K$.
\subsection{An alternative semantics}
The similarities between our work and justification logic make it technically possible to give a more standard justification logic semantics to  $\ELKy$-formulas.
In the following we evaluate formulas over multi-agent Fitting models, see, e.g., \cite{BucKuzStu11JANCL,Renne2012}, where each agent has her own accessibility relation and evidence function.\footnote{The alternative semantics does not work if we just have only one evidence function.} The interpretation of \emph{agent $i$ knows why $\phi$} is given as \emph{agent $i$ knows that $\phi$ and has some justification of $\phi$}, that is \emph{knowing why} translates to \emph{having a justification}.

\begin{defn}[$\JL$-Model]
A $\JL$-model $\M$ is a tuple 
\[
(W, E, \{R_i\mid i\in \A\}, \{\E_i\mid i\in \A\},V)
\]
where: 
\begin{itemize}
\item W is a non-empty set of possible worlds.
\item E is a non-empty set of explanations satisfying the following conditions\jst  
\begin{enumerate}[(a)]
\item If $s,t\in E$, then a new explanation $(s\cdot t)\in E$; 
\item A special symbol $e$ is in $E$. 
\end{enumerate}
\item $R_i\subseteq W\times W$ is an equivalence relation over $W$. 
\item $\E_i: E\times \ELKy \to 2^W$ is an \textit{admissible evidence function} satisfying the following conditions:
\begin{enumerate}[(I)]
\item $\E_i(s, \varphi \rightarrow \psi) \cap \E_i(t, \varphi) \subseteq \E_i(s\cdot t, \psi)$. 
\item If $\varphi\in \Lambda$, then $\E_i(e,\varphi)=W$.
\item Monotonicity: $w \in \E_i(t, \phi)$ and $wR_iv$ implies $v \in \E_i(t, \phi)$.
\end{enumerate}
\item $V:\BP \to 2^W$ is a valuation function. 
\end{itemize}
\end{defn}
\begin{remark}\label{rem.s5}
Note that by imposing monotonicity on S5 models, all $i$-indistinguishable worlds have the same justifications for the same formula, i.e., if $w R_i v$ then 
\[
w\in \E_i(t, \phi) \text{ iff } v\in \E_i(t, \phi).
\]
\end{remark}

\begin{defn}[Semantics]$\ $\\
The satisfaction relation of $\ELKy$-formulas on pointed $\JL$-models is as below:

\begin{tabular}{|l@{\ }c@{\ }l|}
\hline 
$\mathcal{M}, w \vDashJ p $ & $\Longleftrightarrow$& $ w \in V(p)$ \\
$\mathcal{M}, w \vDashJ \neg\varphi$&$ \Longleftrightarrow $&$\mathcal{M}, w \not\vDashJ \varphi $\\
$\mathcal{M}, w \vDashJ \varphi \wedge \psi $&$\Longleftrightarrow $&$\mathcal{M}, w \vDashJ \varphi$ and $\  \mathcal{M}, w \vDashJ \psi$ \\
$\M, w\vDashJ \Ki\varphi$  & $\Longleftrightarrow$ & $\M, v\vDashJ \varphi$ for each $v$ such that $w R_i v$.\\
$\mathcal{M}, w\vDashJ \KWi\varphi$ & $\Longleftrightarrow$ &
 (1) $\exists t\in E$ such that $w\in \E_i(t,\varphi)$;\\
 &&(2) $\forall v\in W, wR_i v$ implies $\mathcal{M}, v\vDashJ \varphi$.\\
 \hline
\end{tabular}
\end{defn}
Compared to our semantics, the crucial difference in the above semantics of $\KWi\phi$ is that it only requires that $t$ is a justification on the \textit{current} world $w$.

\begin{thm}[Soundness]
$\SKy$ is sound for $\JL$-models.
\end{thm}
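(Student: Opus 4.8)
The plan is to show that every axiom of $\SKy$ is valid on the class of $\JL$-models under $\vDashJ$ and that every rule preserves validity; soundness then follows by induction on derivations. The first observation is that the Kripke skeleton $(W,\{R_i\mid i\in\A\},V)$ of any $\JL$-model is an S5 frame, so the purely ``knowing that'' principles---$\TAUT$, $\DISTK$, $\axT$, $\transK$, $\eucK$, together with $\MP$ and $\NECK$---are sound by the standard $\mathbb{S}5$ argument, exactly as in the proof for $\bbc$. Hence it suffices to verify the four items that mention $\KWi$, namely $\DISTY$, $\PRES$, $\transYK$ and $\NECY$, now reading clause~(1) of the semantics as ``$t$ justifies $\varphi$ \emph{at the current world}''.

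Two of these are immediate. For $\PRES$, if $w\vDashJ\KWi\varphi$ then clause~(2) already says $v\vDashJ\varphi$ for every $R_i$-successor $v$, i.e.\ $w\vDashJ\Ki\varphi$. For $\NECY$, if $\varphi\in\Lambda$ then condition~(II) gives $\E_i(e,\varphi)=W$, so clause~(1) holds at every world with witness $e$, while $\varphi$ being a propositional tautology makes clause~(2) hold trivially; thus $\vDashJ\KWi\varphi$. The axiom $\DISTY$ is checked just as in the earlier soundness proof: assuming $w\vDashJ\KWi(\varphi\to\psi)$ and $w\vDashJ\KWi\varphi$, extract witnesses $s$ and $t$ with $w\in\E_i(s,\varphi\to\psi)$ and $w\in\E_i(t,\varphi)$ from the two instances of clause~(1), apply condition~(I) to obtain $w\in\E_i(s\cdot t,\psi)$, and combine the two clause-(2) parts by pointwise modus ponens to get $v\vDashJ\psi$ for all $R_i$-successors $v$; hence $w\vDashJ\KWi\psi$.

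The step I expect to be the real content is $\transYK$, because this is precisely where the alternative semantics diverges from the original one: clause~(1) is now local, so to propagate $w\vDashJ\KWi\varphi$ to an arbitrary $R_i$-successor $u$ I cannot simply reuse a global witness. Instead I would argue as follows. From $w\vDashJ\KWi\varphi$ fix $t$ with $w\in\E_i(t,\varphi)$. Given $wR_iu$, monotonicity (condition~(III)) yields $u\in\E_i(t,\varphi)$, re-establishing clause~(1) at $u$ with the same $t$ (this is the uniformity recorded in Remark~\ref{rem.s5}); and clause~(2) at $u$ follows because $R_i$ is transitive, so every $v$ with $uR_iv$ also satisfies $wR_iv$ and hence $v\vDashJ\varphi$. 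Thus $u\vDashJ\KWi\varphi$ for every successor $u$, giving $w\vDashJ\Ki\KWi\varphi$. The only delicate point is this essential appeal to monotonicity---which was neither present nor needed in the $\bbc$-semantics, where clause~(1) quantifies over all $R_i$-successors from the start---so I would flag condition~(III) as the crux of the argument.
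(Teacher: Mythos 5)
Your proof is correct and follows essentially the same route as the paper's: the S5 fragment is handled by the standard argument, $\PRES$ and $\NECY$ are immediate from clauses (2) and condition (II) respectively, $\DISTY$ uses condition (I) plus pointwise modus ponens on successors, and $\transYK$ rests on monotonicity (condition (III)) together with transitivity of $R_i$. You correctly single out monotonicity as the point where this semantics requires a new ingredient compared to the $\bbc$-soundness proof, which is exactly the step the paper also emphasizes.
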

\begin{proof}
Since $\JL$-models are based on S5 Kripke models, the standard axioms of system $\mathbb{S}5$ are all valid. So we just need to check the rest.  
\begin{itemize}
\item[\DISTY :] $\KWi(\varphi\rightarrow \psi)\rightarrow(\KWi \varphi\rightarrow \KWi \psi)$\\
Suppose $w\vDashJ \KWi (\varphi\rightarrow\psi)$ and $w\vDashJ \KWi\varphi$. 
By soundness of $\DISTK$, we obtain $\forall v\in W, wR_i v$ implies $\mathcal{M}, v\vDashJ \psi$.
Further, by the definition of $\vDashJ$, there exist $s,t \in E$ with $w\in \E_i(s,\varphi\rightarrow\psi)$ and $w\in \E_i(t,\varphi)$.
By the closure conditions on admissible evidence functions we get
$w\in \E_i(s\cdot t, \psi)$. Hence $w\vDashJ \KWi\psi$.
\item[\PRES :] $\KWi \varphi\rightarrow\Ki \varphi$\\
Follows immediately from the definition of $\vDashJ$.
\item[\transYK:]$\KWi\varphi\to \Ki\KWi\varphi$\\ 
Suppose $w\vDashJ \KWi\phi$ and let $v \in W$ be arbitrary with $wR_iv$. By transitivity of $R_i$ we find that
$\forall u\in W, vR_i u$ implies 
$\mathcal{M}, u\vDashJ \varphi$.
Further there exists~$t$ with 
$w \in \E_i(t,\varphi)$ and
monotonicity of 
$\E_i$ yields
$v\in \E_i(t,\varphi)$. We obtain $v \vDashJ \KWi\varphi$ and conclude
$w \vDashJ \Ki\KWi\varphi$.
\item[\NECY] 
Suppose $\varphi\in \Lambda$. By condition (II) on $\E_i$, we get $w \in \E_i(e,\varphi)$ for any $w$.
Since $\Lambda$ is a set of tautologies, we also have that $wR_i v$ implies $v\vDashJ \varphi$. 
Hence $\KWi \phi$ is valid.
\end{itemize}
\end{proof}

To establish completeness of $\SKy$ with respect to $\JL$-models, we show how to transform a given $\ELKy$-model into an equivalent $\JL$-model. Then completeness for $\JL$-models is a consequence of completeness for $\ELKy$-models.

\begin{defn}[Corresponding $\JL$-model]
Given an $\ELKy$-model 
\[
\M = (W, E, \{R_i\mid i\in \A\},\E,V)
\]
we define the corresponding $\JL$-model $\MJ$ as 
\[
(W, E, \{R_i\mid i\in \A\}, \{\EJ_i\mid i\in \A\},V)
\]
where
\[
\EJ_i(t,\varphi) := \{w \mid \forall v \in W,\ w R_i v \text{ implies } v \in \E(t,\varphi) \}. 
\]
\end{defn}
The above definition indeed yields a $\JL$-model.
Moreover, any given $\ELKy$-model~$\M$ and its corresponding $\JL$-model~$\MJ$ satisfy the same formulas. We have the following two lemmas.
\begin{lem}
Let $\M$ be an $\ELKy$-model, then $\MJ$ is a $\JL$-model.
\end{lem}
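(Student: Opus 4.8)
The plan is to verify that each of the five components of $\MJ$ meets the corresponding requirement in the definition of a $\JL$-model. Since $W$, $E$, the family $\{R_i\mid i\in\A\}$, and $V$ are copied verbatim from $\M$, they already satisfy their conditions: $W$ and $E$ are non-empty, $E$ is closed under $\cdot$ and contains $e$, each $R_i$ is an equivalence relation, and $V$ is a valuation. Hence the entire burden falls on showing that each $\EJ_i$ is an \emph{admissible evidence function}, i.e.\ that it satisfies conditions (I), (II), and (III). The only new object is $\EJ_i$, so these are the three things I would check.

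For condition (I), I would fix an arbitrary $w\in\EJ_i(s,\varphi\to\psi)\cap\EJ_i(t,\varphi)$ and an arbitrary $v$ with $wR_iv$. Unfolding the definition of $\EJ_i$ gives $v\in\E(s,\varphi\to\psi)$ and $v\in\E(t,\varphi)$; condition (I) on the admissible explanation function $\E$ of $\M$ then yields $v\in\E(s\cdot t,\psi)$. As $v$ was an arbitrary $R_i$-successor of $w$, the definition of $\EJ_i$ gives $w\in\EJ_i(s\cdot t,\psi)$, establishing the inclusion. For condition (II), I fix $\varphi\in\Lambda$; then $\E(e,\varphi)=W$ by condition (II) on $\E$, so every $R_i$-successor of every world lies in $\E(e,\varphi)$, whence $w\in\EJ_i(e,\varphi)$ for every $w$ and so $\EJ_i(e,\varphi)=W$.

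The only condition that uses a structural property of the frame is monotonicity (III), and I expect it to be the sole place where any real content enters. Here I would take $w\in\EJ_i(t,\varphi)$ together with $wR_iv$ and show $v\in\EJ_i(t,\varphi)$: for any $u$ with $vR_iu$, transitivity of $R_i$ together with $wR_iv$ gives $wR_iu$, so $u\in\E(t,\varphi)$ by the assumption on $w$; since $u$ was arbitrary, $v\in\EJ_i(t,\varphi)$. Conceptually, $\EJ_i(t,\varphi)$ is by construction the $\Box$-style closure of $\E(t,\varphi)$ along $R_i$, so for an equivalence relation it is automatically a union of $R_i$-classes and hence upward closed. No step is genuinely hard; the only point to be careful about is keeping the quantifier over $R_i$-successors straight when unfolding the definition of $\EJ_i$ in each of the three cases.
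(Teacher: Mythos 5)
Your proposal is correct and follows essentially the same route as the paper's proof: all three conditions (I)–(III) on $\EJ_i$ are verified by unfolding the definition of $\EJ_i$ as the set of worlds all of whose $R_i$-successors lie in $\E(t,\varphi)$, with transitivity of $R_i$ doing the work for monotonicity. The additional remarks about the unchanged components $W$, $E$, $R_i$, $V$ are a harmless elaboration of what the paper leaves implicit.
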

\begin{proof}
We have to verify the conditions on $\EJ$:
\begin{enumerate}
\item[(I)] Suppose $w \in \EJ_i(s, \varphi \rightarrow \psi) \cap \EJ_i(t, \varphi)$. Thus for each $v \in W$, $w R_i v$ implies 
\[
v \in \E(s, \varphi \rightarrow \psi) \cap \E(t, \varphi)
\]
and hence also 
$v \in \E(s\cdot t, \psi)$. By the definition of $\EJ_i$ we conclude $w \in \EJ_i(s\cdot t, \psi)$.
\item[(II)] If $\varphi \in \Lambda$, then $\E(e,\varphi)=W$ and hence also $\EJ_i(e,\varphi)=W$.
\item[(III)] Assume $w \in \EJ_i(t, \phi)$ and $w R_i v$.
Let $u$ be arbitrary with $v R_i u$. Since $R_i$ is transitive, we get $w R_i u$. Thus by the definition of $\EJ_i$ we find
$u \in \E(t, \phi)$. We conclude $v\in \EJ_i(t, \phi)$.
\end{enumerate}
\end{proof}
\begin{lem}
Let $\M = (W, E, \{R_i\mid i\in \A\},\E,V)$ be an $\ELKy$-model. For each $w \in W$ and each formula $\varphi$,
\[
\M,w \vDash \varphi \text{ if and only if } \MJ,w \vDashJ \varphi .
\]
\end{lem}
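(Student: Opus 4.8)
The plan is to prove the biconditional by straightforward induction on the structure of $\varphi$, noting first that $\M$ and $\MJ$ share exactly the same underlying set $W$, the same relations $\{R_i\mid i\in\A\}$, and the same valuation $V$; the two models differ only in that $\M$ carries a single $\E$ while $\MJ$ carries the derived family $\{\EJ_i\mid i\in\A\}$. Consequently the atomic case ($\varphi=p$) is immediate from the shared $V$, and the Boolean cases ($\neg$, $\wedge$) follow directly from the induction hypothesis. For $\varphi=\Ki\psi$, both semantic clauses quantify $\psi$ over the same set of $R_i$-successors of $w$, so the case closes by applying the induction hypothesis to $\psi$ at each such successor.

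The only case that uses the specific definition of $\EJ_i$ is $\varphi=\KWi\psi$, and this is where I would focus the argument. Here the key observation is that the clause $\EJ_i(t,\psi):=\{w\mid \forall v\in W,\ wR_iv \text{ implies } v\in\E(t,\psi)\}$ is engineered precisely so that $w\in\EJ_i(t,\psi)$ holds if and only if for every $v$ with $wR_iv$ we have $v\in\E(t,\psi)$. But this last condition is exactly clause~(1) of the $\ELKy$-semantics for $\KWi$. Hence the existential clause ``$\exists t\in E$ with $w\in\EJ_i(t,\psi)$'' in the $\JL$-semantics unfolds, by definition of $\EJ_i$, into ``$\exists t\in E$ such that for each $v$ with $wR_iv$, $v\in\E(t,\psi)$'', which is literally clause~(1) of the $\ELKy$-semantics. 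Since the witnessing set $E$ is identical in both models, the same term $t$ serves as witness in each direction, so clause~(1) transfers with no loss.

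It then remains to match the second (truth-of-$\psi$) clauses: the $\ELKy$-semantics requires $\M,v\vDash\psi$ for all $v$ with $wR_iv$, while the $\JL$-semantics requires $\MJ,v\vDashJ\psi$ for all such $v$. As $\psi$ is a proper subformula of $\KWi\psi$, the induction hypothesis gives $\M,v\vDash\psi$ iff $\MJ,v\vDashJ\psi$ for each relevant $v$, so clause~(2) agrees as well. Combining the two clauses yields $\M,w\vDash\KWi\psi$ iff $\MJ,w\vDashJ\KWi\psi$, completing the induction.

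I do not expect a genuine technical obstacle here; the single point that must be recognized is conceptual rather than computational, namely that the $\ELKy$-semantics already \emph{universally} quantifies the explanation condition over $R_i$-successors inside clause~(1), whereas the $\JL$-semantics checks membership only at the current world. The definition of $\EJ_i$ absorbs exactly this universal quantification, so the apparent mismatch between ``at $w$'' and ``at all successors of $w$'' disappears by unfolding $\EJ_i$. The remark following the $\JL$-model definition (that monotonicity makes $i$-indistinguishable worlds share justifications) is reassuring but is not actually needed for this lemma, since the argument proceeds purely by the definitional unfolding of $\EJ_i$ together with the induction hypothesis on $\psi$.
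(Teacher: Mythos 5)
Your proposal is correct and matches the paper's own proof: both proceed by induction on $\varphi$, with all cases trivial except $\KWi\psi$, where the definition of $\EJ_i$ is unfolded to show that clause (1) of the $\JL$-semantics at $w$ is literally clause (1) of the $\ELKy$-semantics, and clause (2) is handled by the induction hypothesis. Your additional observation that the monotonicity remark is not needed here is also accurate.
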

\begin{proof}
By induction on $\varphi$.\\
Case $\varphi$ is of the form $\KWi\psi$.
Observe that by the definition of $\EJ_i$ we have that
\[
\text{$\exists t\in E$, for each $v$ such that $wR_i v$, $v\in \E(t,\psi)$}
\]
if and only if 
\[
\text{$\exists t\in E$ with $w\in \EJ_i(t,\psi)$}.
\]
Hence $\M, w\vDash \KWi\psi$ if and only if $\MJ, w\vDashJ \KWi\psi$.\\
All other cases are trivial.
\end{proof}
\begin{remark}
The above result also holds if we consider S4-based $\ELKy$-models and $\JL$-models, i.e., when the relations $R_i$ are only reflexive and transitive.  
\end{remark}
\begin{corol}[Completeness]
$\SKy$ is strongly complete for $\JL$-models.
\end{corol}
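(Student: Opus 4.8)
The plan is to prove the contrapositive and reduce everything to the already-established Kripke completeness of $\SKy$ over $\bbc$ (Theorem~\ref{kwlm-c-completeness}). Strong completeness for $\JL$-models means that for every set $\Sigma\cup\{\varphi\}$ of $\ELKy$-formulas, $\Sigma\vDashJ\varphi$ implies $\Sigma\vdash_{\SKy}\varphi$, where $\Sigma\vDashJ\varphi$ denotes semantic consequence over all $\JL$-models. I would assume $\Sigma\not\vdash_{\SKy}\varphi$ and produce a $\JL$-model with a world satisfying all of $\Sigma$ but refuting $\varphi$, thereby witnessing $\Sigma\not\vDashJ\varphi$.

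First, from $\Sigma\not\vdash_{\SKy}\varphi$ the contrapositive of Theorem~\ref{kwlm-c-completeness} gives $\Sigma\not\vDash_{\bbc}\varphi$. Hence there is an $\ELKy$-model $\M$ and a world $w$ with $\M,w\vDash\psi$ for every $\psi\in\Sigma$ and $\M,w\not\vDash\varphi$. Next, I would pass to the corresponding $\JL$-model $\MJ$ built from $\M$. The first of the two preceding lemmas shows that $\MJ$ is genuinely a $\JL$-model, where the verification of monotonicity (III) for $\EJ_i$ relies on the transitivity of $R_i$. The truth-preservation lemma then transfers satisfaction between $\M,w$ and $\MJ,w$ formula by formula; in particular $\MJ,w\vDashJ\psi$ for all $\psi\in\Sigma$ and $\MJ,w\not\vDashJ\varphi$. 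This is exactly the desired counter-model, so $\Sigma\not\vDashJ\varphi$, completing the contrapositive.

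The genuine content of this argument is carried entirely by the two lemmas, and the corollary itself is a routine chaining of them with the earlier completeness theorem; the factivity and introspective variants require no extra work, being absorbed by Theorems~\ref{thm.candcf} and~\ref{kwl-ci-cfi-equiv}. The one substantive point — the step I would regard as the real obstacle, already dispatched in the lemmas — is that a single admissible explanation function $\E$, under which $\KWi\varphi$ demands that $t$ explain $\varphi$ at every $R_i$-accessible world, must be simulated by agent-indexed monotone evidence functions $\EJ_i$, under which $\KWi\varphi$ only demands that $t$ justify $\varphi$ at the current world. The definition $\EJ_i(t,\varphi)=\{w\mid \forall v\,(wR_iv\Rightarrow v\in\E(t,\varphi))\}$ reconciles the two readings precisely, and its correctness is what makes the $\KWi$-clause go through in the truth-preservation lemma.
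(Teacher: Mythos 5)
Your proof is correct and follows the paper's own argument exactly: assume $\Sigma\not\vdash_{\SKy}\varphi$, obtain a countermodel in $\bbc$ from Theorem~\ref{kwlm-c-completeness}, and transfer it to a $\JL$-model via the two preceding lemmas on the corresponding model $\MJ$. Your added observation that the real work lies in the definition of $\EJ_i$ and the truth-preservation lemma is accurate but does not change the route.
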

\begin{proof}
Suppose $\Gamma\not\vdash \varphi$.
By completeness with respect to $\ELKy$-models, 
there is an $\ELKy$-model~$\M$ with a world $w$ such that $ \M,w\vDash\Gamma$ but $\M,w \nVdash \varphi$.
By the previous two lemmas, we find a $\JL$-model $\MJ$ such that 
$\MJ,w\vDashJ \Gamma$ but 
$\MJ,w \not \vDashJ \varphi$.
\end{proof}

We consider the above results about $\JL$-models to be merely technical observations. From a conceptual point of view, as we mentioned, monotonicity in $\JL$-models is not very reasonable in our setting. $\JL$-models are in fact rather weak for our purpose. This can be seen from the fact that a lot of information is lost in the translation from $\ELKy$-models to $\JL$-model, in particular $\ELKy$-models only store known explanations but all other possible explanations are dropped.
Hence $\JL$-model cannot really talk about the difference between $\exists x\K E(x, \varphi)$ and $\K \exists x E(x, \varphi)$, which is essential for our analysis of \emph{knowing why}. Moreover, this $\JL$-like semantics cannot handle conditional knowledge-why, as will be introduced formally in the next section. For example, it is reasonable to have a situation where I don't know why $\phi$ right now, but I know why $\phi$ \textit{given} the information $\psi$,  since the extra information of $\psi$ may rule out some possibilities such that there is a uniform explanation on the remaining possibilities. Due to Remark \ref{rem.s5}, this is not possible in a monotonic S5 (or S4) $\JL$-model with monotonicity. 
\section{Conclusions and Future work}
\label{sec.conc}
In this paper, we present an attempt to formalize the logic of knowing why. In the language we have both the standard knowing that operator~$\K_i$ and the new knowing why operator~$\KWi$. A semantics based on Fitting-like models for justification logic is given, which  interprets knowing why $\phi$ as \textit{there exists an explanation such that I know it is one explanation for $\phi$}. We gave two proof systems, one weaker and one stronger depending on the choice of introspection axioms, and showed their completeness over various model classes. 

Note that, in the logic of knowing value \cite{WF13,WF14}, there is one and only one value for each constant. However, there can be different explanations for the same fact in our setting. This difference also leads to some technical complications in the completeness proof: to negate $\KWi\phi$, it is not enough to just construct another   world. Instead, for \textit{each} explanation $t$ of $\phi$ at the current world, we need to construct one world to refute it.  

As the title shows, it is by no means \textit{the} logic of knowing why. Besides the introspection axioms, there are a lot to be discussed.\footnote{We may also discuss whether $\KWi\phi\to\KWi\Ki\phi$ is reasonable.} For example, although \DISTY\ looks reasonable in a setting focusing on deductive explanations, it may cause troubles if causal explanations or other types of explanations are considered. Recall our example about the flagpole and its shadow. It is reasonable to assume that I know why the shadow is $y$ meters long ($\KWi p$), and I also know why that the shadow is $y$~meters implies the pole is $x$ meters long ($\KWi(p\to q)$). However, it does not entail that I know why the pole is $x$ meters long ($\KWi q$) if we are looking for causal explanation (or functional explanation). One way to go around is to replace the material implication by some other relevant (causal)  conditional, then $\KWi(p\to q)$ may not hold in this setting anymore. 

It seems that we often do not have clear semantic intuition about non-trivial expressions of knowing why. One reason is that there may be different readings of the same statement of knowing why $\phi$  regarding different aspects of $\phi$ and different types of desired explanations. For example, ``I know why Frank went to Beijing on Monday'' may have different meanings depending on the \textit{contrast} the speaker wants to emphasize \cite{van1980scientific}:  
\begin{itemize}
\item I know why \textit{Frank}, not Mary, went to Beijing on Monday.  
\item I know why Frank went to \textit{Beijing}, not Shanghai, on Monday. \item I know why Frank went to Beijing on \textit{Monday},  not on Tuesday. 
\end{itemize}
Following \cite{koura1988approach}, we may partially handle this by adding contrast formulas, e.g., turn $\KWi\varphi$ into $\KWi(\varphi\land \neg \psi\land \neg \chi\land \dots)$ depending on the emphasis. However, we cannot handle the changes of types of explanations depending on the contrast. 


Another future direction is to study the inner structure of explanations further. Hintikka's early work \cite{hintikka1995semantics} may turn out to be helpful, where explanations can be of the form of universally quantified formulas, which connects better with the existing theories of scientific explanations in philosophy of science.\footnote{There are also modal logic approaches to handle scientific explanations cf. e.g, \cite{Seselja2013,Igor}.} Moreover, we may be interested in saying whether an explanation is true. The factivity that we proposed did not fully capture that. 

A promising future study is about group notions of knowing why. For example, how do we define everyone knows why $\varphi$? Simply having a conjunction of $\KWi\varphi$ for each $i$ may not be enough, since people can have different explanations for $\varphi$. The case of \textit{commonly} knowing why $\varphi$ is more interesting. For example, we may have different definitions:
\begin{itemize}
\item It is (standard) common knowledge that everyone knows why $\varphi$ w.r.t. the same explanation.
\item Everyone knows why \dots everyone knows why $\varphi$. 
\end{itemize}
In contrast to standard epistemic logic, such definitions can be quite different from each other. Since each iteration of $\KWi$ may ask for a new explanation, we then have a much richer spectrum of such common knowledge notions, e.g., for the second definition, we may ask the agents to have exactly the same explanation for each level of ``iteration of everyone knows why''. It will be interesting to compare such notions with justified common knowledge~\cite{Artemove06,BucKuzStu11JANCL}.

Of course, we can also consider the dynamics of knowing why, similar to the dynamics in justification logic \cite{Bucheli14,ks13,Renne:2008,Renne2012}. Clearly, public announcements can change knowledge-why. However, in contrast to public announcement logic \cite{PubPlazanew}, adding public announcement will increase the expressive power of the logic, e.g., $[q]\KWi p$ can distinguish the following two pointed models (the left-hand-side worlds as designated), which cannot be distinguished by formulas in $\ELKy$ (a simple inductive proof on the structure of the formula suffices):
$$
\xymatrix@R-10pt{
\txt{$p,q$\\$s\jst p$}\ar@{-}[r]^i&\txt{$p$\\$t\jst p$}
}
\qquad \qquad
\xymatrix@R-10pt{
\txt{$p,q$\\$s\jst p$}\ar@{-}[r]^i\ar@{-}[d]^i&\txt{$p$\\$t\jst p$}\\
\txt{$p,q$\\$r\jst p$}
}
$$
In particular, $[q]\KWi p$ is not equivalent to $q\to \KWi(q\to p)$. To handle public announcements, we can follow the idea in \cite{WF13} and generalize the knowing why operator to a conditional one to express that the agent $i$ knows why $\phi$ given the condition $\psi$ ($\KWi(\psi, \phi)$):
\begin{center}
\begin{tabular}{|cl|}
\hline
$\M, w\vDash \KWi(\psi, \varphi) \Longleftrightarrow$ & $\exists t\in E$, for each $v$ such that $wR_i v$ and $\M, v\vDash\psi$:\\
&(1) $v\in \E(t,\varphi)$ and (2) $\M, v\vDash \varphi$.\\
\hline
\end{tabular}
\end{center}
$\KWi(\psi, \phi)$ is similar to $[\psi]\KWi \phi$, and may be  used to encode the announcements under further restrictions on models. We leave the axiomatization of this more expressive language to further work.  

On the other hand, there can be other natural dynamics, e.g., publicly announcing why, which is similar to public inspection introduced in \cite{GvEW16} in the setting of knowing values. A deeper connection between knowing why and dynamic epistemic logic is possible based on the observation that we do update according to events because we know why they happened (the preconditions). It is suggested by Olivier Roy that there is also a close connection with forward induction in games, where it is crucial to guess why someone did an apparently irrational move. 





Finally, our work is also related to \textit{explicit knowledge}, which aims to avoid logical omniscience. In fact, knowledge with justification or explanation can be viewed as a type of explicit knowledge. One important approach to define explicit knowledge is by using \textit{awareness}: $\varphi$ is a piece of explicit knowledge of $i$ $(X_i\varphi)$ if $i$ is aware of $\varphi$ $(A_i\varphi)$ and $i$ implicitly knows that $\varphi$ $(\K_i\varphi)$, where awareness is often defined syntactically (cf.~\cite{RAK}). Accordingly, the axioms are also changed, e.g., the \texttt{K} axiom  now becomes $X_i(\varphi\to \psi) \land X_i\varphi\land A_i\psi\to X_i\psi.$ Other approaches to explicit knowledge uses idea of \textit{algorithmic knowledge} \cite{HP11:LogOm}. We may explore the concrete connection in the future. 



\bibliographystyle{spmpsci}
\bibliography{fkw}

\begin{thebibliography}{10}
\providecommand{\url}[1]{{#1}}
\providecommand{\urlprefix}{URL }
\expandafter\ifx\csname urlstyle\endcsname\relax
  \providecommand{\doi}[1]{DOI~\discretionary{}{}{}#1}\else
  \providecommand{\doi}{DOI~\discretionary{}{}{}\begingroup
  \urlstyle{rm}\Url}\fi

\bibitem{Artemov95}
Artemov, S.: Operational modal logic (1995).
\newblock Technical Report MSI 95–29, Cornell University

\bibitem{Art01BSL}
Artemov, S.: Explicit provability and constructive semantics.
\newblock Bulletin of Symbolic Logic \textbf{7}(1), 1--36 (2001)

\bibitem{Artemove06}
Artemov, S.: Justified common knowledge.
\newblock Theoretical Computer Science \textbf{357}(1), 4 -- 22 (2006)

\bibitem{artemov2008logic}
Artemov, S.: The logic of justification.
\newblock The Review of Symbolic Logic \textbf{1}(04), 477--513 (2008)

\bibitem{ArtemovN05}
Art{\"{e}}mov, S.N., Nogina, E.: Introducing justification into epistemic
  logic.
\newblock Journal of Logic and Computation \textbf{15}(6), 1059--1073 (2005)

\bibitem{vB91}
van Benthem, J.: Reflections on epistemic logic.
\newblock Logique and Analyse \textbf{34}(133-134), 5--14

\bibitem{bird98}
Bird, A.: Philosophy of science.
\newblock Routledge (1998)

\bibitem{bromberger1966questions}
Bromberger, S.: Questions.
\newblock The Journal of Philosophy \textbf{63}(20), 597--606 (1966)

\bibitem{BucKuzStu11JANCL}
Bucheli, S., Kuznets, R., Studer, T.: Justifications for common knowledge.
\newblock Journal of Applied Non-classical Logics \textbf{21}(1), 35--60
  (2011).
\newblock \doi{10.3166/JANCL.21.35-60}

\bibitem{Bucheli14}
Bucheli, S., Kuznets, R., Studer, T.: Realizing public announcements by
  justifications.
\newblock Journal of Computer and System Sciences \textbf{80}(6), 1046--1066
  (2014)

\bibitem{HBEL}
van Ditmarsch, H., Halpern, J.Y., van~der Hoek, W., Kooi, B. (eds.): Handbook
  of Epistemic Logic.
\newblock College Publications (2015)

\bibitem{RAK}
Fagin, R., Halpern, J., Moses, Y., Vardi, M.: Reasoning about knowledge.
\newblock MIT Press (1995)

\bibitem{fitting2005logic}
Fitting, M.: The logic of proofs, semantically.
\newblock Annals of Pure and Applied Logic \textbf{132}(1), 1--25 (2005)

\bibitem{Fitting08}
Fitting, M.: A quantified logic of evidence.
\newblock Annals of Pure and Applied Logic \textbf{152}(1), 67 -- 83 (2008)

\bibitem{Fitting16}
Fitting, M.: Modal logics, justification logics, and realization.
\newblock Annals of Pure and Applied Logic \textbf{167}(8), 615--648 (2016)

\bibitem{van1980scientific}
van Fraassen, B.C.: The scientific image.
\newblock Oxford University Press (1980)

\bibitem{GvEW16}
Gattinger, M., van Eijck, J., Wang, Y.: Knowing values and public inspection.
\newblock In: Proceedings of ICLA '17 (2017).
\newblock Forthcoming

\bibitem{HP11:LogOm}
Halpern, J.Y., Pucella, R.: Dealing with logical omniscience: Expressiveness
  and pragmatics.
\newblock Artificial Intelligence \textbf{175}(1), 220--235 (2011)

\bibitem{hempel1965aspects}
Hempel, C.: Aspects of Scientific Explanation and Other Essays in the
  Philosophy of Science.
\newblock The Free Press (1965)

\bibitem{hempel1948studies}
Hempel, C.G., Oppenheim, P.: Studies in the logic of explanation.
\newblock Philosophy of science \textbf{15}(2), 135--175 (1948)

\bibitem{hintikka1962knowledge}
Hintikka, J.: Knowledge and belief: an introduction to the logic of the two
  notions, vol. 181.
\newblock Cornell University Press Ithaca (1962)

\bibitem{hintikka1981logic}
Hintikka, J.: On the logic of an interrogative model of scientific inquiry.
\newblock Synthese \textbf{47}(1), 69--83 (1981)

\bibitem{Hintikka1983}
Hintikka, J.: New Foundations for a Theory of Questions and Answers, pp.
  159--190.
\newblock Springer Netherlands, Dordrecht (1983)

\bibitem{hintikka1995semantics}
Hintikka, J., Halonen, I.: Semantics and pragmatics for why-questions.
\newblock The Journal of Philosophy \textbf{92}(12), 636--657 (1995)

\bibitem{kitcher1981explanatory}
Kitcher, P.: Explanatory unification.
\newblock Philosophy of Science \textbf{48}(4), 507--531 (1981)

\bibitem{koura1988approach}
Koura, A.: An approach to why-questions.
\newblock Synthese \textbf{74}(2), 191--206 (1988)

\bibitem{KuStAiML12}
Kuznets, R., Studer, T.: Justifications, ontology, and conservativity.
\newblock In: T.~Bolander, T.~Bra{\"u}ner, S.~Ghilardi, L.~Moss (eds.) Advances
  in Modal Logic, volume 9, pp. 437--458. College Publications (2012)

\bibitem{ks13}
Kuznets, R., Studer, T.: Update as evidence: Belief expansion.
\newblock In: S.~Artemov, A.~Nerode (eds.) Logical Foundations of Computer
  Science, LFCS~2013, \emph{LNCS}, vol. 7734, pp. 266--279. Springer (2013).
\newblock \doi{10.1007/978-3-642-35722-0_19}

\bibitem{PubPlazanew}
Plaza, J.: Logics of public communications.
\newblock Synthese \textbf{158}(2), 165--179 (2007)

\bibitem{Renne:2008}
Renne, B.: Dynamic epistemic logic with justification.
\newblock Ph.D. thesis, New York, NY, USA (2008).
\newblock AAI3310607

\bibitem{Renne2012}
Renne, B.: Multi-agent justification logic: communication and evidence
  elimination.
\newblock Synthese \textbf{185}(1), 43--82 (2012)

\bibitem{salmon1984scientific}
Salmon, W.: Scientific Explanation and the Causal Structure of the World.
\newblock Princeton University Press (1984)

\bibitem{Schurz1995}
Schurz, G.: Scientific explanation: A critical survey.
\newblock Foundations of Science \textbf{1}(3), 429--465 (1995)

\bibitem{Schurz1999}
Schurz, G.: Explanation as unification.
\newblock Synthese \textbf{120}(1), 95--114 (1999)

\bibitem{schurz2005explanations}
Schurz, G.: Explanations in science and the logic of why-questions: Discussion
  of the \textrm{Halonen--Hintikka}-approachand alternative proposal.
\newblock Synthese \textbf{143}(1), 149--178 (2005)

\bibitem{Igor}
Sedl\'ar, I., Halas, J.: Modal logics of abstract explanation frameworks
  (2015).
\newblock Abstract in Proceedings of CLMPS 15

\bibitem{Seselja2013}
{\v{S}}e{\v{s}}elja, D., Stra{\ss}er, C.: Abstract argumentation and
  explanation applied to scientific debates.
\newblock Synthese \textbf{190}(12), 2195--2217 (2013)

\bibitem{Wang15:lori}
Wang, Y.: A logic of knowing how.
\newblock In: Proceedings of LORI-V, pp. 392--405 (2015)

\bibitem{Wang16}
Wang, Y.: Beyond knowing that: a new generation of epistemic logics.
\newblock In: H.~van Ditmarsch, G.~Sandu (eds.) Jaakko Hintikka on knowledge
  and game theoretical semantics. Springer (2016).
\newblock Forthcoming

\bibitem{WF13}
Wang, Y., Fan, J.: Knowing that, knowing what, and public communication: Public
  announcement logic with \texttt{Kv} operators.
\newblock In: Proceedings of IJCAI'13, pp. 1139--1146 (2013)

\bibitem{WF14}
Wang, Y., Fan, J.: Conditionally knowing what.
\newblock In: Proceedings of AiML Vol.10, pp. 569--587 (2014)

\bibitem{weber2013scientific}
Weber, E., van Bouwel, J., De~Vreese, L.: Scientific explanation.
\newblock Springer (2013)

\bibitem{Yavorskaya2006}
Yavorskaya~(Sidon), T.: Multi-agent explicit knowledge.
\newblock In: D.~Grigoriev, J.~Harrison, E.A. Hirsch (eds.) Proceedings of CSR
  2006, pp. 369--380. Springer (2006)

\end{thebibliography}

\end{document}